\documentclass[letterpaper]{article} 
\usepackage{aaai24}  
\usepackage{times}  
\usepackage{helvet}  
\usepackage{courier}  
\usepackage[hyphens]{url}  
\usepackage{graphicx} 
\urlstyle{rm} 
\usepackage{natbib}  
\usepackage{caption} 
\frenchspacing  
\setlength{\pdfpagewidth}{8.5in} 
\setlength{\pdfpageheight}{11in} 
\usepackage{algorithm}
\usepackage{algorithmic}
\usepackage{multirow}
\usepackage{enumitem, array}

\usepackage{amsmath,bm}









\def\eqref#1{equation~\ref{#1}}









\def\1{\bm{1}}










\DeclareMathAlphabet{\mathsfit}{\encodingdefault}{\sfdefault}{m}{sl}
\SetMathAlphabet{\mathsfit}{bold}{\encodingdefault}{\sfdefault}{bx}{n}













\DeclareMathOperator*{\argmax}{arg\,max}

\newcommand{\ols}[1]{\mskip.5\thinmuskip\overline{\mskip-.5\thinmuskip {#1} \mskip-.5\thinmuskip}\mskip.5\thinmuskip} 
\newcommand{\olsi}[1]{\,\overline{\!{#1}}} 
\makeatletter
\newcommand\closure[1]{
  \tctestifnum{\count@stringtoks{#1}>1} 
  {\ols{#1}} 
  {\olsi{#1}} 
}
\long\def\count@stringtoks#1{\tc@earg\count@toks{\string#1}}
\long\def\count@toks#1{\the\numexpr-1\count@@toks#1.\tc@endcnt}
\long\def\count@@toks#1#2\tc@endcnt{+1\tc@ifempty{#2}{\relax}{\count@@toks#2\tc@endcnt}}
\def\tc@ifempty#1{\tc@testxifx{\expandafter\relax\detokenize{#1}\relax}}
\long\def\tc@earg#1#2{\expandafter#1\expandafter{#2}}
\long\def\tctestifnum#1{\tctestifcon{\ifnum#1\relax}}
\long\def\tctestifcon#1{#1\expandafter\tc@exfirst\else\expandafter\tc@exsecond\fi}
\long\def\tc@testxifx{\tc@earg\tctestifx}
\long\def\tctestifx#1{\tctestifcon{\ifx#1}}
\long\def\tc@exfirst#1#2{#1}
\long\def\tc@exsecond#1#2{#2}
\makeatother
\usepackage{newfloat}
\usepackage{listings}
\usepackage{tikz}
\usepackage{comment}
\usepackage{amsmath,amssymb} 
\usepackage{cite}
\usepackage{subcaption}
\usepackage{tabularray}
\usepackage{booktabs}
\usepackage[utf8]{inputenc} 
\usepackage{url}            
\usepackage{amsfonts}       
\usepackage{nicefrac}       
\usepackage{microtype} 
\usepackage{xspace}

\def\etal{{et al.\xspace}}

\def\ood{\textsc{ood}\xspace}

\def\id{\textsc{id}\xspace}

\def\name{$\mathcal{S}$NN\xspace}

\def\*#1{\mathbf{#1}}

\usepackage{url} 
\usepackage{mathtools}
\usepackage{amsthm}

\theoremstyle{plain}
\newtheorem{theorem}{Theorem}[section]

\newtheorem{lemma}[theorem]{Lemma}

\theoremstyle{definition}

\theoremstyle{remark}

\usepackage{xspace}
\usepackage{array}
\usepackage{ragged2e}
\newcolumntype{P}[1]{>{\RaggedRight\hspace{0pt}}p{#1}}
\newcolumntype{X}[1]{>{\RaggedRight\hspace*{0pt}}p{#1}}
\usepackage{tcolorbox}
\usepackage{tikz}
\usetikzlibrary{arrows,shapes,positioning,shadows,trees,mindmap}
\usepackage[edges]{forest}
\usetikzlibrary{arrows.meta}
\colorlet{linecol}{black!75}
\usepackage{xkcdcolors}

\usetikzlibrary{backgrounds}
\usetikzlibrary{arrows,shapes}
\usetikzlibrary{tikzmark}
\usetikzlibrary{calc}


\usepackage[capitalize,noabbrev]{cleveref}
%
\usepackage{newfloat}
\usepackage{listings}
\DeclareCaptionStyle{ruled}{labelfont=normalfont,labelsep=colon,strut=off} 
\lstset{%
	basicstyle={\footnotesize\ttfamily},
	numbers=left,numberstyle=\footnotesize,xleftmargin=2em,
	aboveskip=0pt,belowskip=0pt,%
	showstringspaces=false,tabsize=2,breaklines=true}
\floatstyle{ruled}
\newfloat{listing}{tb}{lst}{}
\floatname{listing}{Listing}
%
\pdfinfo{
/TemplateVersion (2024.1)
}

\setcounter{secnumdepth}{2} 

%


\title{How to Overcome Curse-of-Dimensionality for Out-of-Distribution Detection?}

\author{
    Soumya Suvra Ghosal\footnote{Equal contributions},
    Yiyou Sun*,
    Yixuan Li
}
\affiliations{
    Department of Computer Sciences, University of Wisconsin -- Madison\\
    \{sghosal, sunyiyou, sharonli\}@cs.wisc.edu
   
}
\usepackage{bibentry}

\begin{document}

\maketitle

\begin{abstract}
Machine learning models deployed in the wild can be challenged by out-of-distribution (OOD) data from unknown classes. Recent advances in OOD detection rely on distance measures to distinguish samples that are relatively far away from the in-distribution (ID) data. Despite the promise, distance-based methods can suffer from the curse-of-dimensionality problem, which limits the efficacy in high-dimensional feature space. 
To combat this problem, we propose a novel framework, Subspace Nearest Neighbor (\name), for OOD detection. In training, our method regularizes the model and its feature representation by leveraging the most relevant subset of dimensions (i.e. subspace).  Subspace learning yields highly distinguishable distance measures between ID and OOD data. 
We provide comprehensive experiments and ablations to validate the efficacy of \name. 
Compared to the current best distance-based method, \name reduces the average FPR95 by $15.96\%$ on the CIFAR-100 benchmark.
\end{abstract}
\vspace{-0.3cm}
\section{Introduction}
\label{sec:intro}

Modern machine learning models deployed in the wild face risks from out-of-distribution (OOD) data -- samples from novel contexts and classes that were not taught to the model during training. Identifying OOD samples is paramount for safely operating machine learning models in uncertain environments. A safe system should be able to identify the unknown data as OOD so that the control can be passed to humans. This illuminates the importance of OOD detection, which allows the learner to express ignorance and take further precautions.

Recent solutions for OOD detection tasks often rely on distance measures to distinguish samples that are conspicuously dissimilar from the in-distribution (ID) data~\cite{lee2018simple, tack2020csi, 
 2021ssd, sun2022knn}. These methods leverage embeddings extracted from a well-trained classifier and operate under the assumption that test OOD samples are far away from the ID data in the feature space. For example, \citet{sun2022knn} recently showed that non-parametric nearest neighbor distance is a strong indicator function for OOD likelihood. Despite the promise, distance measures can be sensitive to the dimensionality of the feature space. Those research questions have been raised in the 90s,
including \citet{beyer1999nearest}, as to whether the nearest neighbor
is meaningful in high dimensions. The key result of ~\cite{beyer1999nearest} states that in high dimensional spaces, the difference between the minimum and the maximum distance between a random reference point and a list of $n$ random data points $\*x_1$,...,$\*x_n$ become indiscernible compared to the minimum distance:
\begin{equation}
    \lim_{d \rightarrow \infty} E\bigg( \frac{D_\text{max} - D_\text{min}}{D_\text{min}}\bigg) \rightarrow 0.
\end{equation}

This phenomenon---recognized as one aspect of the ``\textit{curse of dimensionality}''~\cite{beyer1999nearest, hinneburg2000nearest, aggarwal2001surprising, houle2010can, kriegel2009outlier}---limits the efficacy of distance-based OOD detection method for modern neural networks. For example, let's consider 50000 points sampled randomly from a 2048 dimensional space bounded by $[0,1]$. Given a test sample, to find the 10-th nearest neighbor, we need a hyper-cube with a length of at least 0.9958\footnote{The calculation is done by $0.9958^{2048} \approx \frac{10}{50000}$. }, which covers the entire feature space. In other words, the high-dimensional spaces can lead to instability of nearest neighbor queries. Despite its importance, the problem has received little 
attention in the literature on OOD detection. This begs the following question:

\begin{center}
    \textit{How do we combat the curse-of-dimensionality for OOD detection?}
\end{center}
Targeting this important problem, we propose a new framework called \emph{Subspace Nearest Neighbor} (\textbf{SNN}) for OOD detection. Our key idea is to learn a feature subspace---a subset of relevant features---for distance-based detection. Our method is motivated by the observation drawn in \citet{houle2010can}: irrelevant attributes in a
feature space may impede the separability of different distributions and thus have the
potential for rendering the nearest neighbor less meaningful.
In light of this,
\name regularizes the model and its feature representation by leveraging the most relevant subset of dimensions (\emph{i.e.} subspace) for the class prediction. Geometrically, this is equivalent to projecting the embedding to the selected subspace, and the output for
each class is derived using the projected features. The subspaces are learned in a way that maintains the discriminative power for classifying ID data. During testing, distance-based OOD detection is performed by leveraging the learned features.

We show that \name is both theoretically grounded (Section~\ref{sec:theory}) and empirically effective (Section~\ref{sec:experiment}) to combat the curse-of-dimensionality for OOD detection. 
Extensive experiments show that \name substantially outperforms the competitive methods in the literature.  For example,
using the CIFAR-100 dataset as ID and LSUN~\cite{yu2015lsun} as OOD data,
our approach reduces the FPR95 from {66.09}\% to {24.43}\%---a \textbf{41.66}\% direct improvement over the current best distance-based  method KNN~\cite{sun2022knn}. Beyond the OOD detection task, our subspace learning algorithm also leads to improved calibration performance of the ID data itself\footnote{Code is available at \url{https://github.com/deeplearning-wisc/SNN}}. We summarize our contributions below:

\begin{enumerate}
    \item We propose a new framework, \emph{subspace nearest neighbor} (\name),  for distance-based OOD detection. \name effectively alleviates the curse-of-dimensionality suffered by the vanilla KNN approach operating on the full feature space.
    
    \item We demonstrate the strong performance of \name on an extensive collection of OOD detection tasks. On CIFAR-100, \name substantially reduces the average FPR95 by \textbf{15.96}\% compared to the current best distance-based approach~\cite{sun2022knn}. Further, we provide in-depth analysis and ablations to understand the effect of each component in our method (Section~\ref{sec:ablations}). 
    
    \item We provide theoretical analysis, showing that ID and OOD data become more distinguishable in a subspace with reduced dimensions. We hope our work draws attention to the strong promise of subspace learning for OOD detection.

\end{enumerate}

\section{Preliminaries}
\label{sec:background}
\paragraph{Setup.} In this paper, we consider a supervised multi-class classification problem, where $\mathcal{X}$ denotes the input space and $\mathcal{Y}=\{1,2,...,C\}$ denotes the label space. The training set $\mathbb{D}_\text{in} = \{(\*x_i, y_i)\}_{i=1}^N$ is drawn \emph{i.i.d.} from the joint data distribution $P_{\mathcal{X}\mathcal{Y}}$.
Let $\mathcal{P}_\text{in}$ denote the marginal distribution on $\mathcal{X}$. Let $f: \mathcal{X} \mapsto \mathbb{R}^{|\mathcal{Y}|}$ be a neural network trained on samples drawn from $P_{\mathcal{X}\mathcal{Y}}$ to output a logit vector, which is used to predict the label of an input sample. 

\vspace{0.05cm}
\noindent\textbf{Out-of-distribution detection.} 
Our framework concerns a common real-world scenario in which the algorithm is trained on the ID data but will then be deployed in environments containing \emph{out-of-distribution} samples from {unknown} class $y\notin \mathcal{Y}$ and therefore should not be predicted by $f$. Formally, OOD detection can be formulated as a level-set estimation problem. At test time, one can perform the  following test to determine whether a sample $\*x \in \mathcal{X}$ is from $\mathcal{P}_{\text{in}}$ (ID) or not (OOD):
\begin{equation}
	G_{\lambda}(\*x) =
	\begin{cases}
		\id & \text{if}\ S(\*x) \geq \lambda, \\
		\ood & \text{if}\ S(\*x) < \lambda \\
	\end{cases}
	\label{equ:ood_score}
\end{equation}
where $S(\*x$) denotes a scoring function and $\lambda$ is a chosen threshold such that a high fraction ($95\%$) of \id data is correctly classified. Test samples with higher values of S($\*x$) are classified as \id and vice-versa.

\section{Methodology}
\label{sec:method}

A common design of $S(\*x)$ often relies on distance measures~\cite{2021ssd,sun2022knn,lee2018simple} to distinguish OOD samples that are far away from the ID data in the feature space. Despite the promise, they operate on the full feature space with a large dimension, which is known to be susceptible to the ``{curse-of-the dimensionality}''. Intuitively, irrelevant attributes in a
feature space may impede the separability of different distributions and thus have the
potential to render the distance measure less meaningful. Rather than relying on the full feature space, our key idea is to use a subset of relevant dimensions for distance-based OOD detection.

In what follows, we formally introduce our framework, \emph{Subspace Nearest Neighbor} (dubbed {\name}) for OOD detection. \name consists of two components. First, during training, the model learns the relevant subset of dimensions
(\emph{i.e.} subspace) for each class. The subspaces are learned in a way that maintains the discriminative power for classifying ID data (Section~\ref{sec:train}). Second, during testing, distance-based OOD detection is performed by using the learned features (Section~\ref{sec:test}).  

\subsection{Learning the Subspaces}
\label{sec:train}

We start with the first challenge: {how to learn the relevant subset of dimensions (a.k.a. subspace)?} Our idea is driven by the fact that the class prediction of a given example depends only on a subset of relevant features. Moreover, the relevance of any particular feature dimension may vary across different classes and instances. For example, for a \textsc{Dog} class instance, features such as tail, nose, etc are relevant. However, these feature dimensions can be irrelevant for an instance from \textsc{Bird} class. This necessitates training the model using \emph{class-dependent subspaces} conditioned on the given instance. We formally describe the learning procedure in the sequel.

\paragraph{Defining the subspace.} We consider a deep neural network parameterized by $\theta$, which encodes an input $\*x \in \mathcal{X}$ to a feature space with dimension $m$. We denote by $ h(\*x) = [h^{(1)}(\*x), h^{(2)}(\*x),...,h^{(m)}(\*x) ]$ the feature vector from the penultimate layer of the network, which is a $m$-dimensional vector. For a class $c$, we select the subspace by masking the original feature:
\begin{equation}
\small
    h_{\text{masked}}(\*x) = h(\*x) \odot R_c(\*x), \textit{s.t.}, R_c(\*x)\in \{0,1\}^m, \|R_c(\*x)\|_1 = s,
\end{equation}
where $\odot$ represents the element-wise multiplication and  $R_c(\*x)$  is a binary mask with $s$ non-zero elements encoding the subset of dimensions (\emph{i.e.} subspace) for class $c$. This way, the high-dimensional features can be projected onto the corresponding subspace defined by $R_c(\*x)$, which is class-dependent for a given input $\*x$. The model output $f_c(\*x;\theta, R_c(\*x))$ is further obtained by passing the projected feature through a linear transformation with weight vector $\*w_c \in \mathbb{R}^m$:
\begin{equation}
\label{eq:orig}
    f_c(\*x; \theta, R_c(\*x)) = \langle \*w_c, h(\*x) \odot R_c(\*x) \rangle .
\end{equation}

\paragraph{Learning subspace via relevance.}
Given an input $\*x$, we now describe how to identify $R_c(\*x)$, the subset of most \emph{relevant} dimensions for a class $c$. Our key idea is to formulate the following  optimization objective:
\begin{equation}
    \min_\theta ~~\mathbb{E}_{(\*x,y)\sim\mathcal{P}_{\mathcal{X}\mathcal{Y}}} \left[\mathcal{L}_{\text{CE}}\left({{f}}(\*x;\theta, R_c(\*x)), y\right)\right],
\end{equation}
\begin{multline}
     \text{where ~~~}f_c(\*x; \theta, R_c(\*x)) \\
    = \max_{R_c(\*x)\in \{0,1\}^m, \|R_c(\*x)\|_1 = s} \langle \*w_c, h(\*x) \odot R_c(\*x) \rangle. 
    \label{eq:relevance}
\end{multline}

Equation~\ref{eq:relevance} indicates that the subspace is chosen based on the features that are most responsible for the class prediction. 
Geometrically, this is equivalent to projecting 
the point to the selected subspace, and the output for each class is derived using
the projected features. The search of subspace can be computed efficiently. One can easily prove that $f_c(\*x;\theta, R_c(\*x))$ in Equation~\ref{eq:relevance} is equivalent to the summation of the top $s$ entries in 
$\*w_c \odot h(\*x)$ with the highest values. In Section~\ref{subsec:common_benchmark}, we show that our subspace learning objective incurs similar (and in some cases faster) training time compared to the standard cross-entropy loss. During inference, given a test sample $\mathbf{x}^{\prime}$, the class prediction $\hat{y}$ is made by $
    \hat{y}  = \argmax_{c \in \mathcal{Y}} \langle \*w_c, h(\*x^{\prime}) \odot R_c(\*x^{\prime}) \rangle $.

Our formulation also flexibly allows each feature dimension to be used for multiple class predictions. As an example, a class \textsc{laptop} might depend on the two most relevant feature dimensions: \textsc{Keyboard}  and \textsc{screen}. Similarly, the class prediction for \textsc{TV} might be relying on two features \textsc{Telecontrol}  and \textsc{screen}. In both cases, the feature of \textsc{screen} is
responsible for both TV and Laptop. 

\subsection{Out-of-Distribution Detection with Subspace Nearest Neighbor}
\label{sec:test}

We now describe how to perform test-time OOD detection leveraging the learned features.
In particular, given a test sample's embedding $\*z'$,  we determine its OOD-ness by computing the  $k$-th nearest neighbor distance \textit{w.r.t.} the training embeddings.  Here $\*z'= h(\*x') / \lVert h(\*x') \rVert_2$ is the $L_2$-normalized feature embedding. The decision function for OOD detection is given by: 
\begin{equation*}
    G(\*x';k, \mathbb{D}_\text{in}) = \mathbf{1}\{-r_k(\*z') \ge \lambda\},
\end{equation*} 
where $r_k(\*z') = ||\*z' - \*z_{k}||_2$ is the  distance to the $k$-th nearest neighbor ($k$-NN) training embedding ($\*z_k$) and $\mathbf{1}\{\cdot\}$ is the indicator function. The threshold $\lambda$ is typically chosen so that a high fraction of ID data (\emph{e.g.,} 95\%) is correctly classified. Different from \citet{sun2022knn}, we calculate the nearest neighbor distance based on the subspace-regularized feature space, rather than the vanilla ERM-trained model. Next, we  show that our subspace nearest neighbor approach is both theoretically grounded (Section~\ref{sec:theory}) and empirically effective (Section~\ref{sec:experiment}) to combat the curse-of-dimensionality for OOD detection.

\section{Theoretical Insights}
\label{sec:theory}
In this section, we provide theoretical insights into why using a feature subspace is critical for $k$-NN distance-based OOD detection.

\noindent \textbf{Setup.}
We consider the OOD detection task as a special binary classification task, where the negative samples (OOD) are only available in the testing stage. A key challenge in OOD detection (and theoretical analysis) is the lack of knowledge on OOD distribution. Common algorithms detect OOD samples when ID density  $p(\*z_i|\*z_i \in  \text{ID})$ is low. For simplicity, we let $p_{in}(\*z_i) = p(\*z_i|\*z_i \in \text{ID})$. We thus divide the bounded-set $\mathcal{Z}$ into two disjoint set $\mathcal{Z}_{in}$ and $\mathcal{Z}_{out}$ with:
    \begin{align}
        \mathcal{Z}_{in} = \{\*z | p_{in}(\*z) \geq \lambda\},  \mathcal{Z}_{out} = \{\*z | p_{in}(\*z) < \lambda\}, 
    \end{align}
where $\lambda$ is a certain real number.

\noindent \textbf{Main result. }   In our analysis, we aim to investigate the impact of embedding dimension $m$ on the following gap:
\begin{align}
    \hat{\Delta}(m) = \mathbb{E}[\hat{p}_{in}(\*z)|{\*z \in \mathcal{Z}_{in}}] - \mathbb{E}[\hat{p}_{in}(\*z)|{\*z \in \mathcal{Z}_{out}}] ,
\end{align}
which is the difference of average estimated density between $\mathcal{Z}_{in}$ and $\mathcal{Z}_{out}$. This gap can be translated into OOD detection performance with the $k$-NN distance. To see this, a function of the $k$-NN distance $r_k(\*z)$ can be regarded as an estimator which approximates $p_{in}(\*z)$ \cite{zhao2022analysis}:
\begin{equation}
  \hat{p}_{in}(\*z) = \frac{k-1}{N V(B(\mathbf{z}, r_k(\*z))},
\end{equation}
where $V(B(\mathbf{z}, r_k(\mathbf{z})))$ denotes the volume of the ball $B(\mathbf{z}, r_k(\mathbf{z}))$ centered at $\mathbf{z}$ with radius $r_k(\mathbf{z})$ and $N$ represents number of training samples. To investigate how OOD detection performance with $k$-NN distance is impacted by the {curse-of-dimensionality}, we derive the following theorem showing how the lower bound of $\hat{\Delta}(m)$ changes when $m$ becomes large. 

\begin{theorem} Let $\mathbb{E}[p_{in}(\*z)|{\*z \in \mathcal{Z}_{in}}] - \mathbb{E}[p_{in}(\*z)|{\*z \in \mathcal{Z}_{out}}] = \Delta(m)$ be a function of the feature dimensionality $m$. We have the following bound (proof is in Appendix B): 
\begin{align}
            \hat{\Delta}(m) & \geq \Delta(m) - 
            O\left((\frac{k}{N})^{\frac{1}{m}} + k^{-\frac{1}{2}}\right).
    \end{align}

\label{th:main}
\end{theorem}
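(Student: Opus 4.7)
The plan is to reduce the theorem to a pointwise error bound on the $k$-NN density estimator $\hat{p}_{in}(\vz) = (k-1)/(N\, V(B(\vz, r_k(\vz))))$ and then integrate that bound against the two conditional distributions.

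First, I would invoke a standard consistency result for the $k$-nearest-neighbor density estimator, exactly the kind cited in \cite{zhao2022analysis}. Under mild regularity conditions on $p_{in}$ (bounded, Lipschitz or Hölder in a neighborhood of $\vz$, bounded away from $0$ on the support), one has a bias--variance decomposition whose dominant terms scale as
\begin{equation*}
\bigl|\mathbb{E}[\hat{p}_{in}(\vz)] - p_{in}(\vz)\bigr| = O\!\bigl((k/N)^{1/m}\bigr), \qquad
\sqrt{\operatorname{Var}[\hat{p}_{in}(\vz)]} = O\!\bigl(k^{-1/2}\bigr).
\end{equation*}
The bias scales with $(k/N)^{1/m}$ because the typical radius $r_k(\vz)$ of the $k$-NN ball satisfies $V(B(\vz, r_k(\vz))) \asymp k/(N p_{in}(\vz))$, so $r_k \asymp (k/N)^{1/m}$, and a Taylor/Hölder expansion of $p_{in}$ over that ball contributes an error of that order. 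The variance scales as $k^{-1/2}$ because the count of points in the ball is effectively Binomial with mean $\sim k$. Combining the two yields the pointwise bound
\begin{equation*}
\bigl|\hat{p}_{in}(\vz) - p_{in}(\vz)\bigr| \le C\!\left((k/N)^{1/m} + k^{-1/2}\right)
\end{equation*}
in expectation (or with high probability, depending on the cited form).

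Next I would pass from the pointwise error to the conditional expectations in $\hat\Delta(m)$ and $\Delta(m)$ by linearity of expectation. Specifically, conditioning on $\vz \in \mathcal{Z}_{in}$ and $\vz \in \mathcal{Z}_{out}$ respectively gives
\begin{align*}
\mathbb{E}[\hat{p}_{in}(\vz)\mid \vz\in\mathcal{Z}_{in}] &\ge \mathbb{E}[p_{in}(\vz)\mid \vz\in\mathcal{Z}_{in}] - C\!\left((k/N)^{1/m} + k^{-1/2}\right), \\
\mathbb{E}[\hat{p}_{in}(\vz)\mid \vz\in\mathcal{Z}_{out}] &\le \mathbb{E}[p_{in}(\vz)\mid \vz\in\mathcal{Z}_{out}] + C\!\left((k/N)^{1/m} + k^{-1/2}\right).
\end{align*}
Subtracting the second from the first and absorbing constants into the $O(\cdot)$ delivers $\hat{\Delta}(m) \ge \Delta(m) - O\!\left((k/N)^{1/m} + k^{-1/2}\right)$.

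The main obstacle, and where I would spend the most care in a fully rigorous write-up, is justifying the pointwise bias bound $O((k/N)^{1/m})$ uniformly enough that one can integrate it against the conditional distribution on $\mathcal{Z}_{in}$ and $\mathcal{Z}_{out}$. This requires (i) regularity assumptions on $p_{in}$ (so that the Taylor expansion inside the $k$-NN ball is valid), and (ii) a lower bound on $p_{in}$ on $\mathcal{Z}_{in}$ and some control on how small it can become on $\mathcal{Z}_{out}$, because the implicit constants in the bias term depend on $p_{in}(\vz)^{-1/m}$ through the radius estimate. These technicalities would be deferred to Appendix~B, where I would also handle the variance term either via a concentration inequality for $r_k(\vz)$ or via the moment bounds for $1/V(B(\vz,r_k(\vz)))$ that are standard in the $k$-NN density estimation literature. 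The rest of the argument is then just linearity of expectation and the triangle inequality.
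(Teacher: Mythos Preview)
Your proposal is correct and rests on the same key ingredient as the paper --- the $k$-NN density estimation error bound of \cite{zhao2022analysis} --- but the bookkeeping differs slightly. You invoke a \emph{pointwise} bound $|\hat p_{in}(\vz)-p_{in}(\vz)|\le C\bigl((k/N)^{1/m}+k^{-1/2}\bigr)$ and then take conditional expectations over $\mathcal{Z}_{in}$ and $\mathcal{Z}_{out}$ separately. The paper instead works with the \emph{integrated} error $\mathbb{E}\bigl[|p_{in}(\vz)-\hat p_{in}(\vz)|\bigr]$ (which is exactly what Lemma~\ref{lemma:est_err} / \cite{zhao2022analysis} provides), and then converts the two conditional expectations into this single unconditional one by writing each as a ratio of integrals and lower-bounding the denominators by $\min\bigl(\int_{\mathcal{Z}_{in}}p_{in},\int_{\mathcal{Z}_{out}}p_{in}\bigr)$, a constant absorbed into the $O(\cdot)$. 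Your route is cleaner but nominally asks for a stronger (uniform or pointwise-in-expectation) version of the estimator bound, which is precisely the technicality you flag; the paper's route sidesteps that by only ever needing the $L_1$ bound at the cost of the extra $\min$-denominator step. Either way the argument goes through, and the final inequality is identical.
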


\noindent \textbf{Implications of theory.} 
In Theorem~\ref{th:main}, we see that $\hat{\Delta}(m)$ is lower-bounded by two terms: $\Delta(m)$ and $-O\left((\frac{k}{N})^{\frac{1}{m}} + k^{-\frac{1}{2}}\right)$. Specifically, $\Delta(m)$ represents the true density gap between the ID set and OOD set, which generally sets a ground-truth oracle on how well we can perform OOD detection.  The second term $-O\left((\frac{k}{N})^{\frac{1}{m}} + k^{-\frac{1}{2}}\right)$ measures the approximation error 
when using $k$-NN distance to estimate the true density gap. The approximation error worsens when $m$ gets larger,  which is { exactly where the ``curse of dimensionality'' comes from}.  Consider the following case: when $\Delta(m)$ is nearly constant, $-O\left((\frac{k}{N})^{\frac{1}{m}} + k^{-\frac{1}{2}}\right)$  monotonically decreases when $m$ gets larger, then it is likely for $\hat{\Delta}(m)$ goes to 0 (indistinguishable between ID \& OOD). This theoretical finding further validates the necessity of using a subspace of embedding, which is exactly the solution provided by $\mathcal{S}$NN.

\vspace{0.2cm}
\noindent \textbf{Remark on a small $\mathbf{m}$.} If we only focus on the second term $-O\left((\frac{k}{N})^{\frac{1}{m}} + k^{-\frac{1}{2}}\right)$, it seems like we need the dimension $m$ to be as small as possible. However, in practice, a very small dimension will lose the necessary information needed to separate ID and OOD. Such implications are captured by $\Delta(m)$ which measures the true density gap between ID and OOD. If $\Delta(m)$ is an increasing function over $m$, there is a trade-off in choosing $m$ with the second term that inversely encourages a smaller $m$. Such insight is indeed verified empirically in Section~\ref{sec:ablations} that there exists an optimal $m$--- neither too small nor too large.

\section{Experiments}
\label{sec:experiment}

\begin{table}[t]
\centering
\small
\resizebox{0.99\linewidth}{!}{
\begin{tabular}{lcccc}
\multirow{2}{1.5cm}{\textbf{Methods}} & \multicolumn{2}{c}{\textbf{Far-OOD}} & \multicolumn{2}{c}{\textbf{Near-OOD}}\\
\cmidrule{2-5}
& \textbf{FPR95}  & \textbf{AUROC} & \textbf{FPR95}  & \textbf{AUROC}\\
& $\downarrow$ & $\uparrow$ & $\downarrow$ & $\uparrow$ \\
\toprule
\emph{Using model outputs}\\
MSP~\cite{hendrycks2016baseline} & 52.11 & 91.79 & 64.66 & 85.28 \\
ODIN~\cite{liang2018enhancing}  & 26.47 & 94.48 & 52.32 & 88.90\\
GODIN~\cite{hsu2020generalized}  & 17.42  & 95.84 & 60.69 & 82.37 \\
Energy score~\cite{liu2020energy}  & 28.40 & 94.22 & 50.64 & 88.66 \\
ReAct~\cite{sun2021react} & 33.12 & 94.32 & 53.51 & 88.96\\
GradNorm~\cite{huang2021importance} & 24.79 & 92.58 & 65.44 & 79.31\\
LogitNorm~\cite{wei2022mitigating}  & 19.61 & 95.51 & 55.08 & 88.03\\
DICE~\cite{sun2022dice}  & 20.83 & 95.24 & 58.60 & 87.11 \\
\midrule
\emph{Using feature representations}\\
Mahalanobis~\cite{lee2018simple} & 44.55 & 82.56 & 87.71 & 78.93 \\
KNN~\cite{sun2022knn}  & 18.50 & 96.36 & 58.34 & 87.90 \\
\midrule 
 \name (ours) & \textbf{14.99} & \textbf{97.15}  & \textbf{50.10} &  \textbf{89.80}\\
 & $\pm{0.87}$ & $\pm{0.27}$ & $\pm{1.09}$ & $\pm{0.65}$\\
\bottomrule
\end{tabular}}
\caption{\small Performance comparison on near-OOD and far-OOD detection task. Architecture used is DenseNet-101 and ID data is CIFAR-10. We report the mean and variance across 3 training runs.}
\label{tab:hard_ood}
\end{table}
\begin{table}[t]
\small
\centering
\resizebox{0.99\linewidth}{!}{
\begin{tabular}{lccc}
\textbf{Method} & \textbf{FPR95}  & \textbf{AUROC} & \textbf{ID Acc.}\\
& $\downarrow$ & $\uparrow$ & $\uparrow$ \\
\toprule
\emph{Methods using model outputs}\\
MSP~\cite{hendrycks2016baseline} & 77.59 & 76.47 &  75.14\\
ODIN~\cite{liang2018enhancing} & 56.39 & 86.02 & 75.14\\
GODIN~\cite{hsu2020generalized} & 44.08 &  89.05 & 74.37\\
Energy score~\cite{liu2020energy} & 57.07 &  84.83 &  75.14\\
ReAct~\cite{sun2021react} & 75.06 & 79.51 & 66.56\\
GradNorm~\cite{huang2021importance} & 63.05 & 79.80 & 75.14\\
LogitNorm~\cite{wei2022mitigating} & 61.10 & 84.72 & 75.42\\
DICE~\cite{sun2022dice} & 49.72 & 87.23 & 68.65 \\
\midrule 
\emph{Methods using feature representations}\\
Mahalanobis~\cite{lee2018simple} & 56.93 & 80.27 &  75.14\\
KNN~\cite{sun2022knn} & 47.21 & 85.27 & 75.14\\
\midrule 
 \name (ours) & \textbf{31.25} & \textbf{90.76} & \textbf{75.59}\\
& $\pm{1.25}$ & $\pm{0.36}$ & $\pm{0.08}$\\
\bottomrule
\end{tabular}}
\caption{\small Performance comparison on CIFAR-100 dataset. We use DenseNet-101 for all baselines. Best  results are in \textbf{bold}. We report the mean and variance across 3 different training runs.}
\label{tab:cifar-100}
\end{table}
In this section, we extensively evaluate the effectiveness of our proposed method. 
The goal of our experimental sections is to mainly answer the following questions: (1) Can \name alleviate the curse of dimensionality? (2) How does \name compare against the state-of-the-art OOD detection methods?  Due to space constraints, extensive experimental details are in Appendix C. Our code is open-sourced for the research community.

\subsection{Evaluation on Common Benchmarks}
\label{subsec:common_benchmark}

\noindent \textbf{Datasets.} In this section, we make use of commonly studied CIFAR-10 (10 classes) and CIFAR-100 (100 classes)~\cite{krizhevsky2009learning} datasets as ID. Both datasets consist of images of size $32 \times 32$. We use the standard split with $50,000$ images for training and $10,000$ images for testing. We evaluate the methods on common OOD datasets: \texttt{Textures}~\cite{cimpoi2014describing}, \texttt{SVHN}~\cite{svhn}, \texttt{LSUN-Crop}~\cite{yu2015lsun}, \texttt{LSUN-Resize}~\cite{yu2015lsun}, \texttt{iSUN}~\cite{xu2015turkergaze}, and \texttt{Places365}~\cite{zhou2017places}. Images in all these test datasets are of size $32 \times 32$.

\paragraph{Evaluation metrics.} We compare the performance of various methods using the following metrics: 
(1) {FPR95} measures the false positive rate (FPR) of OOD samples when $95\%$ of ID samples are correctly classified;
(2) {AUROC} is the area under the Receiver Operating Characteristic curve; 
and (3) {ID Acc.} measures the ID classification accuracy.

\vspace{0.2cm}
\noindent \textbf{Comparison with competitive methods.} In Table~\ref{tab:cifar-100}, we provide a comprehensive comparison with competitive OOD detection baselines on  CIFAR-100. {We provide a detailed description of baseline approaches in Appendix C.3.} We observe that our proposed method \name significantly outperforms the latest rivals. For a fair comparison, we divide the baselines into two categories: methods using model outputs and methods using feature representations.
From Table~\ref{tab:cifar-100}, we highlight two salient observations: (1) Considering methods based on feature representations, \name outperforms KNN (non-parametric) and Mahalanobis (parametric) by \textbf{15.96\%} and \textbf{25.68\%} respectively in terms on FPR95. The results validate that learning feature subspace effectively alleviates the ``curse-of-dimensionality" problem that is troubling the existing KNN approach. (2) Further, \name also performs better than output-based methods such as ReAct~\cite{sun2021react}. Specifically, with CIFAR-100 as ID, \name provides a $\mathbf{43.81}\%$ improvement in FPR95 as compared to ReAct~\cite{sun2021react}. Notably, \name provides a \textbf{18.47\%} improvement compared to~\cite{sun2022dice}, a post-hoc sparsification method. While DICE can severely affect the ID test accuracy (68.65\%), \name exhibits stronger classification performance (75.59\%) by baking in the inductive bias of subspaces through training. An extensive discussion is provided in Section~\ref{sec:discussion}.

\paragraph{Evaluation on near-OOD data.} In Table~\ref{tab:hard_ood}, we compare the performance in detecting near-OOD data, which refers to samples near the ID data. Near-OOD is particularly challenging to detect, and can often be misclassified as ID. We report the performance on CIFAR-10 (ID) vs. CIFAR-100 (OOD), which is the most commonly used dataset pair for this task. We observe that \name consistently outperforms existing algorithms for near-OOD detection tasks, further demonstrating its strengths. Compared to KNN, \name reduces the FPR95 by 8.24\%. For completeness, we also provide far-OOD evaluation results on CIFAR-10, where \name achieves an average FPR95 of 14.99\%. Full result on each test dataset for CIFAR-10 is available in Appendix D.4.

\begin{table}[t]
\small
\centering
\resizebox{0.95\linewidth}{!}{
\begin{tabular}{lccc}
\textbf{Method} & \textbf{Dataset (ID)} & \textbf{FPR95}  & \textbf{AUROC} \\
& & $\downarrow$ & $\uparrow$  \\
\toprule
Mahalanobis~\cite{lee2018simple} & CIFAR-10 & 44.55 & 82.56  \\

\name (w. Mahalanobis) & CIFAR-10 &  \textbf{34.68} &  \textbf{87.87} \\
\midrule
Mahalanobis~\cite{lee2018simple} & CIFAR-100 & 56.93 & 80.27  \\

\name (w. Mahalanobis) & CIFAR-100 &  \textbf{55.05} &  \textbf{80.77} \\
\bottomrule
\end{tabular}}
\caption{\small \name is also compatible with parametric approaches such as Mahalanobis distance~\cite{lee2018simple}. The model is DenseNet. All values are averaged over six OOD test datasets.}
\label{tab:compatibility}
\end{table}

\paragraph{Compatibility with other distance-based approaches.} 

Beyond KNN~\cite{sun2022knn}, the Mahalanobis distances~\cite{lee2018simple} is also one of the most popular distance-based approaches to detect OOD. 
However, all prior solutions measure the distance with a full feature space which can also suffer from the curse of dimensionality. 
In this section, we show that subspace learning can also benefit parametric approaches like Mahalanobis distance~\cite{lee2018simple}. In Table~\ref{tab:compatibility}, we compare the OOD detection performance of using Mahalanobis distance on the vanilla model and the model trained with \name. 
We see that coupling subspace learning (in training) with Mahalanobis distance (in testing) reduces FPR95 by {9.87\%} and {1.88\%} on CIFAR-10 and CIFAR-100 datasets respectively.

\begin{table}[t]
\small
\centering
\resizebox{0.55\linewidth}{!}{
\begin{tabular}{lcc}
\toprule
\multirow{2}{2cm}{\textbf{Training Method}} &  CIFAR-10 & CIFAR-100 \\ 
& \multicolumn{2}{c}{(Train time in hours)} \\
\midrule
Standard & $2.10$ & $2.25$\\ 
 \name & $1.75$ & $1.89$ \\
\bottomrule
\end{tabular}}
\caption{\small \textbf{Computational cost for training}. trained using ResNet-101. 
Model used is DenseNet-101. For the comparison, we used the software configuration as reported in Appendix C.2.}
\label{tab:train_time}
\end{table}
\paragraph{Computational complexity.}  In Table~\ref{tab:train_time}, we compare the training time of \name with the standard training method using cross-entropy loss. We observe that training using \name incurs no additional computation overhead but rather is slightly more efficient compared to standard training procedures. This is because we perform gradient descent only on a subset of weights corresponding to the selected feature subspace. Thus, our method overall leads to faster updates and convergence. {In Appendix D.1, we further show that
\name remains competitive and outperforms the KNN counterpart on other common architecture.}

\section{Further Understanding of \name}
\label{sec:ablations}

Through comprehensive evaluations in Section~\ref{sec:experiment}, we have established the effectiveness of \name for OOD detection. 
\noindent In this section, we provide an in-depth analysis of several questions: (1) How does the subspace dimension impact the performance? (2) How does OOD detection performance change if we change the subspace selection strategy? 
(3) How does the $k$ in $k$-NN distance impact the OOD detection performance? We show comprehensive ablation studies to answer these questions. 
 For consistency, all ablations are based on CIFAR-100 as ID dataset and DenseNet~\cite{huang2018densely} architecture unless specified otherwise.

\begin{figure*}[t!]
\centering
\begin{minipage}{.49\textwidth}
  \centering
 \vspace{0.7cm}
  \includegraphics[width = 1\columnwidth]{images/Frame_6.pdf}
  \vspace{0cm}
    \caption{\small \textbf{Left}: Effect of varying the relevance ratio ($r$) on OOD detection performance when $k=20$. \textbf{Right}: Effect of varying the number of neighbors ($k$) when $r=0.25$. }

    \label{fig:ablations}
\end{minipage}
\hspace{0.1cm}
\begin{minipage}{.48\textwidth}
  \centering
\centering
    \includegraphics[width = 0.78\columnwidth]{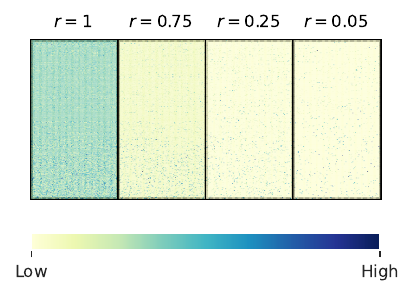}
    \caption{
    \small Visualization of learned final-layer weight matrix for $r \in \{1, 0.75, 0.25, 0.05\}$. For each $r$, we visualize a $342$-dimensional weight vector corresponding to each class in CIFAR-100. }
    \label{fig:subspace}
\end{minipage}%

\end{figure*}
\paragraph{Ablation on subspace dimension.} In this ablation, we aim to empirically verify our theoretical analysis in Section~\ref{sec:theory}, and understand the effect of subspace dimension. 
We start by defining the \textbf{relevance ratio} $r = \frac{s}{m} \in (0,1]$, which captures the sparsity of feature space used in training \name. Recall that $m$ is the original dimension of features and $s$ is the dimension we kept in \name. 

In simple words, $r$ represents the ratio between the dimension of the subspace and the full feature dimension.
Specifically, we train and compare multiple models by varying $r = \{ 0.05, 0.15, 0.25, 0.35, 0.55, 0.75\}$. Figure~\ref{fig:ablations} (left) summarizes the effect of $r$ on OOD detection. We observe that: (1) Irrespective of the relevance ratio used, \name is consistently better than KNN~\cite{sun2022knn} when $r < 1$. This validates the efficacy of learning feature subspaces for OOD detection, without excessive hyperparameter tuning. (2) Setting a mild ratio (\emph{e.g.} $r=0.25$) provides the optimal OOD detection performance, which is consistent with one chosen using our validation strategy (see Appendix C.4). (3) In the extreme case, when $r$ is too small (\emph{e.g.} $r={0.05}$), we observe a deterioration in the OOD detection performance. Overall our empirical observations align well with our theoretical insight provided in Section~\ref{sec:theory}. 

\paragraph{Visualization of the learned weight matrix.}

To further verify our method, we visualize in Figure~\ref{fig:subspace} the 
learned final-layer weight matrix under different relevance ratios ($r = s/m$). For each $r$, we visualize the $342$ dimensional weight vector corresponding to each class in CIFAR-100, \emph{i.e.}, a $342 \times 100$ matrix. When $r=1$ (\emph{i.e.} without any subspace constraint), the model utilizes the full feature space. Further, the visualization confirms that decreasing the relevance ratio effectively reduces feature subspace dimensionality. 

\begin{table}[h]
\centering
\small
\resizebox{0.85\linewidth}{!}{
\begin{tabular}{lccc}
\textbf{Method} & \textbf{FPR95}  & \textbf{AUROC} \\
& $\downarrow $& $\uparrow$ \\
\toprule
Random subspace~\cite{ho1998nearest} & 42.21 & 84.97 \\
Subspace with least relevance & 63.96 & 80.82\\
\name (ours) & \textbf{31.25} & \textbf{90.76} \\
\bottomrule
\end{tabular}}
\caption{\small Ablation on subspace selection methods. Best performing results are marked in \textbf{bold}. Model is DenseNet. All values are averaged over multiple OOD test datasets.}
\label{tab:subspace-selection}
\end{table}

\paragraph{Ablation on subspace selection.} A core component of our algorithm involves selecting the  \emph{most relevant} dimension for a class prediction. In particular, the subspace is chosen based on the dimensions that contributed most to the model's output. In this ablation, we contrast our subspace selection mechanism with random subspace~\cite{ho1998nearest}, a classical alternative. The random subspace relies on a stochastic process that randomly selects $s$ components in the feature vector. Simply put, this approach randomly sets $s$ out of $m$ elements in each $R_c$ vector to be 1 and 0 elsewhere. We report ablation results  in Table~\ref{tab:subspace-selection}. For a fair comparison, we use relevance ratio $r = 0.25$ and the number of neighbors $k = 20$ for all methods.  Empirical results highlight that randomly chosen subsets of dimensions are sub-optimal for the OOD detection task. Lastly, we also contrast with selecting the \emph{least relevant} feature dimensions. We replace Equation~\ref{eq:relevance} with: 
\begin{align}
     f_c(\*x; \theta, R_c(\*x)) & =  \min_{R_c(\*x)\in \{0,1\}^m, \|R_c(\*x)\|_1 = s} \langle \*w_c, h(\*x) \odot R_c(\*x) \rangle,
     \label{eq:least_relevance}
\end{align}
which essentially changes from \textsc{max} to \textsc{min}.  As expected, using the least relevant feature dimension results in significantly worse OOD detection performance.

\paragraph{Ablation on number of nearest neighbors $k$.} 
In Figure~\ref{fig:ablations} (right), we visualize the effect of varying the number of nearest-neighbors ($k$) on OOD detection performance. Here the model is trained with $r=0.25$ on CIFAR-100. Specifically, we vary $k \in \{5,10,20,50,100,200,500,1000\}$. We observe that the  OOD detection performance is relatively stable under a mild $k$. In Appendix D.2, we further visualize the interaction between the two hyper-parameters $r$ and $k$ through OOD detection performance. 

\vspace{0.1cm}
\noindent\textbf{\name improves calibration performance.} {
In addition to superior OOD detection performance, we aim to further  investigate the calibration performance of ID data itself. As a quick recap,  the calibration performance measures the alignment between the model's confidence and its actual predictive accuracy. We hypothesize that learning the feature subspace helps alleviate the problem of over-confident predictions on ID data, thereby improving model calibration. We verify in  Appendix D.3 that training with \name indeed significantly improves the model calibration. }

\vspace{0.1cm}
\noindent\textbf{\name scales to large datasets.} 
In this section, we evaluate \name on a more realistic high-resolution dataset ImageNet ~\cite{deng2009imagenet}. Compared to CIFAR-100, inputs scale up in size in ImageNet-100 (we follow standard data augmentation pipelines and resize the input to 224 by 224). For OOD test datasets, we use the same ones in~\cite{huang2021mos}, including subsets of \texttt{Places365}, \texttt{Textures}, \texttt{iNaturalist} and \texttt{SUN}. 
We train the ResNet-101 model for 100 epochs using a batch size of 256, starting from randomly initialized weights. We use SGD with a momentum of $0.9$, and a weight decay of 1e-4. We set the initial learning rate as $0.1$ and use a cosine-decay schedule. We set $r = 0.35$ and $k = 200$ based on our validation strategy described in Appendix C.4. We contrast two models trained with and without subspace learning. The results in terms of FPR95 and AUROC are shown in Figure~\ref{fig:imagenet}. The results suggest that \name remains effective on all the OOD test sets and consistently outperforms KNN. This further verifies the benefits of explicitly promoting feature subspace to combat curse-of-dimensionality. 

\begin{figure}[t]
    \centering
    \includegraphics[width = \linewidth]{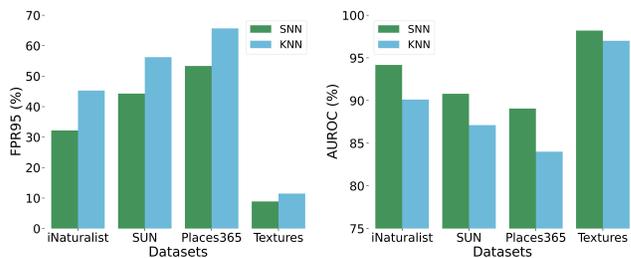}
    \caption{\small OOD detection performance comparison on ImageNet dataset (ID). \name consistently outperforms KNN across all OOD test datasets on the same architecture.  }
   
    \label{fig:imagenet}
\end{figure}

\section{Discussion}
\label{sec:discussion}

\paragraph{Relations to DICE~\cite{sun2022dice}.} Our work differs from DICE in two crucial
aspects, both in terms of training loss and test-time OOD detection mechanism.
As introduced in Section~\ref{sec:method}, \name can be viewed as \emph{training-time} regularization with subspace. In contrast, DICE~\cite{sun2022dice}  only explored \emph{test-time} weight sparsification mechanism, without explicitly learning the subspace in training. Specifically, DICE employs the standard cross-entropy loss, whereas we contribute a new learning objective (\emph{c.f.} Section~\ref{sec:train}) for OOD detection. Importantly, we show that training-time regularization provides substantial gains over simple post-hoc sparsification. For example, as evidenced in Table~\ref{tab:cifar-100},  post-hoc sparsification can severely affect the ID test accuracy. In contrast, our method bakes the inductive bias of ``feature subspace'' into training time, and exhibits stronger ID generalization in testing time. Besides, another major difference between \name and DICE lies in how test-time OOD detection is performed. In particular, \name is a distance-based method that operates in the \emph{feature} space, whilst DICE derives OOD scores from the model \emph{output} space. Different from DICE, \name is motivated to address the curse-of-dimensionality issue for OOD detection. Table~\ref{tab:cifar-100} compares the OOD detection performance of \name with DICE. For example, on CIFAR-100, \name reduces FPR95 by \textbf{18.47\%} compared to DICE. Improved performance validates the advantage of learning a subspace in training as opposed to test-time sparsification.

\paragraph{Relations to Dropout strategies.} A commonly used technique to prevent over-parameterization is Unit Dropout~\cite{hinton2012improving,srivastava2014dropout}. Unlike Random Subspace~\cite{ho1998nearest}, Unit Dropout randomly sets elements in the feature activation vector $h(\*x)$ to be 0. \citet{ba2013adaptive} propose Adaptive Dropout, which generalizes the prior approach by learning the dropout probability using a binary belief network. \citet{gomez2019learning} proposed Targeted Dropout which is based on keeping top-$k$ weights in a fully connected layer with the highest magnitude. Their motivation is that low-valued weights should be allowed to increase their value during training, if they become important. Hence, instead of completely pruning the rest of the weights, they apply dropout with a fixed rate to randomly drop a fraction of the low-valued weights. We contrast the OOD detection performance of \name with these dropout strategies in Table 7 (Appendix). Notably, \name improves FPR95 by \textbf{37.90\%} and \textbf{20.14\%} as compared to Targeted Dropout~\cite{gomez2019learning} and Adaptive Dropout~\cite{ba2013adaptive}, respectively. Lastly, \citet{wong2021leveraging} proposed an elastic net formulation to enforce sparsity for model interpretability. In Appendix D.5, we provide an extended discussion contrasting the OOD detection performance of \name with \cite{wong2021leveraging}.

\section{Related Works}
\label{sec:related works}

\vspace{0.1cm}
\noindent\textbf{Training-based OOD detection methods.} Commonly used approaches in this direction design training-time loss functions and regularization for better ID/OOD separability. Initial work by \cite{devries2017improved} proposed to augment the network with a confidence estimation branch. MOS~\cite{huang2021mos} improves OOD detection by modifying the loss to use a pre-defined group structure. Another branch of studies based on train-time regularization have also shown promising and significant improvement in OOD detection performance~\cite{lee2017training, bevandic2018discriminative, hendrycks2018deep, geifman2019selective, hein2019why, meinke2019towards, liu2020energy, wei2022mitigating,  KatzSamuels2022, Du2022, Du2022a, ming2022posterior, hebbalaguppenovel, ming2023how, huang2023harnessing, wang2023outofdistribution, bai2023feed, wang2023learning, du2023dream, zheng2023detection}. Common approaches include training models to give predictions with lower confidences~\cite{lee2018simple, hendrycks2018deep, wang2022outofdistribution} or higher energies~\cite{liu2020energy, Du2022, song2022rankfeat} for outlier samples. 
    Most regularization methods assume the availability of a large-scale and diverse outlier dataset which is not always realizable. 
    Different from these methods, our proposed method does not require any additional outlier data. Rather, in this work, we formulate a novel train-time regularization approach based on learning feature subspaces. Our empirical analysis highlights the benefits of subspace learning for OOD detection.

\vspace{0.1cm}
 \noindent\textbf{Inference-based OOD detection methods.} Studies in this domain mainly focus on designing scoring functions for OOD detection. These approaches include for example: (1) confidence-based methods~\cite{hendrycks2016baseline, liang2018enhancing}, (2) energy-based methods~\cite{liu2020energy, morteza2022provable}, (3) gradient-based method~\cite{huang2021importance}, and (4) {feature-based methods}~\cite{Wilson_2023_ICCV}. Some representative works include Mahalanobis distance~\cite{lee2018simple, 2021ssd} and non-parametric KNN distance~\cite{sun2022knn}. However, the efficacy of these metrics is often limited in higher dimensions due to the ``curse-of-dimensionality'' (\textit{c.f.} Section~\ref{sec:theory}). Our paper targets precisely this critical yet underexplored problem. 
    We show new insights that learning feature subspaces effectively alleviate this problem. 

\vspace{0.1cm}
\noindent\textbf{Subspace learning.} To overcome limitations of ``curse-of-dimensionality" in high dimensions, \citet{ho1998nearest} explored the idea of random subspace
for nearest neighbor. Subspace learning has also been used to improve search queries in high dimensions~\cite{hund2015subspace}, feature selection~\cite{liu2007computational}, finding clusters in arbitrarily oriented spaces~\cite{kriegel2009clustering}, and intrinsic dimensionality estimation~\cite{houle2014efficient}.  Different from these previous works, we explore class-relevant feature subspace learning for OOD detection.

\section{Conclusion}
\label{sec:conclusion}

Our work highlights the challenge of curse-of-dimensionality in OOD detection, and introduces a new solution called \name for detecting OOD samples. Traditional distance-based methods for OOD detection suffer from the curse-of-dimensionality, which makes it difficult to distinguish between ID and OOD samples in high-dimensional feature spaces. To address this issue, \name learns subspaces that capture the most informative feature dimensions for the task. Our method is supported by theoretical analysis, which shows that reducing the feature dimensions improves the distinguishability between ID and OOD samples. Extensive experiments demonstrate that \name achieves significant improvements in both OOD detection and ID calibration performance. We hope that our approach will inspire future research on this important problem.

\section*{Acknowledgement}
Research is supported by the AFOSR Young Investigator Program under award number FA9550-23-1-0184, National Science Foundation (NSF) Award No. IIS-2237037 \& IIS-2331669, Office of Naval Research under grant number N00014-23-1-2643, and faculty research awards/gifts from Google and Meta.  Any opinions, findings, conclusions, or recommendations
 expressed in this material are those of the authors and do not necessarily reflect the views, policies, or endorsements either expressed or implied, of the sponsors.

\bibliography{egbib}
\clearpage

\appendix
\section{Societal Impact}
\label{sec:impact}
In this paper, we show that in high-dimensional spaces, the efficacy of distance-based out-of-distribution (OOD) detection methods can be limited by curse-of-dimensionality. To combat this problem, we propose a novel framework of subspace learning for OOD detection. OOD detection is a critically important component for a vast range of systems which include
business applications (e.g., content understanding), transportation (e.g., autonomous vehicles), and health care (e.g., unseen disease identification). Our study has positive societal impacts. We hope that it will further enhance the understanding regarding the crucial issue of how curse-of-dimensionality affects distance-based OOD detection methods. Our study does not involve any human subjects or violation of legal compliance. We do not anticipate the potentially harmful consequences of our work. Through our study and releasing our code, we hope to raise stronger research and societal attention to the problem of OOD detection.

\section{Proof of Main Theorem}
\label{app:proof}

\begin{theorem} (Recap of Th.~\ref{th:main}) We let $\mathbb{E}[p_{in}(\*z)|{\*z \in \mathcal{Z}_{in}}] - \mathbb{E}[p_{in}(\*z)|{\*z \in \mathcal{Z}_{out}}] = \Delta(m)$ as a function of the feature's dimensionality $m$. We have the following bound:
    \begin{align}
            \hat{\Delta}(m) & \geq \Delta(m) - O((\frac{k}{N})^{\frac{1}{m}} + k^{-\frac{1}{2}})
    \end{align}
\label{th:sup_main}
\end{theorem}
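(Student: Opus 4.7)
The plan is to start by decomposing the target quantity into per-point density estimation errors and then invoke a standard bias/variance analysis of the $k$-NN density estimator $\hat{p}_{in}(\mathbf{z}) = (k-1)/(N V(B(\mathbf{z}, r_k(\mathbf{z}))))$. By linearity of expectation,
\begin{equation*}
\hat{\Delta}(m) - \Delta(m) = \mathbb{E}[\hat{p}_{in}(\mathbf{z}) - p_{in}(\mathbf{z}) \mid \mathbf{z} \in \mathcal{Z}_{in}] - \mathbb{E}[\hat{p}_{in}(\mathbf{z}) - p_{in}(\mathbf{z}) \mid \mathbf{z} \in \mathcal{Z}_{out}],
\end{equation*}
so by the triangle inequality it suffices to upper bound $\sup_{\mathbf{z}} \mathbb{E}_{\mathbb{D}_{in}}[|\hat{p}_{in}(\mathbf{z}) - p_{in}(\mathbf{z})|]$ and take the lower bound $\hat{\Delta}(m) \geq \Delta(m) - 2 \sup_\mathbf{z} \mathbb{E}[|\hat{p}_{in}(\mathbf{z}) - p_{in}(\mathbf{z})|]$. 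The entire game is then to show this pointwise error is $O((k/N)^{1/m} + k^{-1/2})$.

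The next step is a bias/variance decomposition of $\hat{p}_{in}(\mathbf{z})$. Using $V(B(\mathbf{z}, r)) = c_m r^m$, and letting $F(\mathbf{z}, r) = \int_{B(\mathbf{z},r)} p_{in}(\mathbf{y})\, d\mathbf{y}$, an ``oracle'' radius $r^*$ satisfies $F(\mathbf{z}, r^*) = k/N$, which under a mild smoothness assumption on $p_{in}$ (e.g.\ Lipschitz with bounded density) yields $r^* = \Theta\left(\left(\tfrac{k}{N p_{in}(\mathbf{z})}\right)^{1/m}\right)$. A Taylor/Lipschitz expansion then gives
\begin{equation*}
F(\mathbf{z}, r^*) = p_{in}(\mathbf{z})\, c_m (r^*)^m \bigl(1 + O(r^*)\bigr),
\end{equation*}
so using the oracle radius in place of $r_k(\mathbf{z})$ would contribute a bias of order $r^* = O((k/N)^{1/m})$ to the density estimate.

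The third step is to control the fluctuation of $r_k(\mathbf{z})$ around $r^*$. The number of training samples falling inside $B(\mathbf{z}, r)$ is $\mathrm{Binomial}(N, F(\mathbf{z}, r))$; at $r = r^*$ this count has mean $k$ and standard deviation $\Theta(\sqrt{k})$, so a standard concentration argument (Chernoff or Bernstein) shows that with high probability $r_k(\mathbf{z})^m$ deviates from $(r^*)^m$ by a relative factor $1 + O(k^{-1/2})$. Propagating this through the reciprocal $1/(N c_m r_k^m)$ and combining with the bias term produces
\begin{equation*}
\mathbb{E}\bigl[|\hat{p}_{in}(\mathbf{z}) - p_{in}(\mathbf{z})|\bigr] \;=\; O\bigl((k/N)^{1/m} + k^{-1/2}\bigr),
\end{equation*}
with constants depending on the supremum and Lipschitz constant of $p_{in}$ on its support, both of which are finite since $\mathcal{Z}$ is bounded. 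Inserting this into the triangle-inequality bound from the first step yields the claim.

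The main obstacle will be the third step: $r_k(\mathbf{z})$ and the count inside $B(\mathbf{z}, r_k(\mathbf{z}))$ are coupled, and the estimator depends nonlinearly on $r_k$ through $1/r_k^m$, so naive variance bounds do not pass through the reciprocal. The cleanest fix is to argue on the ``good'' event $\{|r_k^m - (r^*)^m| \leq \epsilon (r^*)^m\}$ with $\epsilon = \Theta(k^{-1/2})$, which holds with probability $1 - O(e^{-ck})$, and on the complement use the crude bound that $\hat{p}_{in}$ is uniformly bounded (since we can always floor $r_k$ at the nearest-neighbor distance in a bounded set) so that the tail contribution is absorbed into the $k^{-1/2}$ term. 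Handling the regime where $p_{in}(\mathbf{z})$ is very small near the boundary $\partial \mathcal{Z}_{in}$ (so $r^*$ may be large) is the only remaining subtlety, but the bounded-support assumption on $\mathcal{Z}$ plus choosing the implicit constants of $\Delta(m)$ to absorb boundary effects resolves it.
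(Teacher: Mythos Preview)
Your overall architecture is the same as the paper's: decompose $\hat{\Delta}(m)-\Delta(m)$ into two error terms, apply the triangle inequality, and reduce to a pointwise density-estimation error for the $k$-NN estimator. The paper then simply invokes a lemma from \cite{zhao2022analysis} stating $\mathbb{E}[|p_{in}(\mathbf{z})-\hat p_{in}(\mathbf{z})|]=O((k/N)^{1/m}+k^{-1/2})$, whereas you actually sketch the bias/variance argument behind that rate; in that sense your proposal is more self-contained.

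The one genuine methodological difference is how the two conditional error terms are combined. You pass to $\sup_{\mathbf{z}}\mathbb{E}[|\hat p_{in}(\mathbf{z})-p_{in}(\mathbf{z})|]$, which forces you to worry about points where $p_{in}(\mathbf{z})$ is tiny (your acknowledged ``remaining subtlety''). The paper instead writes each conditional expectation as a ratio of integrals, $\mathbb{E}[\,|p_{in}-\hat p_{in}|\mid \mathbf{z}\in\mathcal{Z}_\bullet]=\int_{\mathcal{Z}_\bullet}|p_{in}-\hat p_{in}|\,p_{in}\,d\mathbf{z}\big/\int_{\mathcal{Z}_\bullet}p_{in}\,d\mathbf{z}$, and then upper-bounds the sum of the two by replacing both denominators with their minimum, collapsing the numerators into a single $p_{in}$-weighted $L^1$ error $\mathbb{E}[|p_{in}-\hat p_{in}|]$ over all of $\mathcal{Z}$. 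This sidesteps the low-density boundary issue entirely (regions with small $p_{in}$ are automatically down-weighted) and lets the cited lemma apply directly. Your route works under the extra regularity you assume, but the paper's min-denominator trick is cleaner and avoids the boundary caveat.
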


\begin{proof}

\begin{align*}
     &\mathbb{E}[p_{in}(\*z)|{\*z \in \mathcal{Z}_{in}} ]  - 
    \mathbb{E}[p_{in}(\*z)|{\*z \in \mathcal{Z}_{out}}]
    \\ 
    &~~=  \mathbb{E}[p_{in}(\*z) - \hat{p}_{in}(\*z)|{\*z \in \mathcal{Z}_{in}}] + 
    \mathbb{E}[\hat{p}_{in}(\*z)|{\*z \in \mathcal{Z}_{in}}] \\
    &~- \mathbb{E}[\hat{p}_{in}(\*z)|{\*z \in \mathcal{Z}_{out}}] +
    \mathbb{E}[\hat{p}_{in}(\*z) - p_{in}(\*z)|{\*z \in \mathcal{Z}_{out}}] 
    \\ &~~\leq  \mathbb{E}[|p_{in}(\*z) - \hat{p}_{in}(\*z)| | {\*z \in \mathcal{Z}_{in}}] + 
    \mathbb{E}[\hat{p}_{in}(\*z)|{\*z \in \mathcal{Z}_{in}}]\\
    &~- \mathbb{E}[\hat{p}_{in}(\*z)|{\*z \in \mathcal{Z}_{out}}] +
    \mathbb{E}[|\hat{p}_{in}(\*z) - p_{in}(\*z)| | {\*z \in \mathcal{Z}_{out}}] 
    \\
    &~~= \frac{\int_{\mathcal{Z}_{in}} |p_{in}(\*z) - \hat{p}_{in}(\*z)| p_{in}(\*z) d\*z }{\int_{\mathcal{Z}_{in}}  p_{in}(\*z) d\*z } \\
    &~+ \frac{\int_{\mathcal{Z}_{out}} |p_{in}(\*z) - \hat{p}_{in}(\*z)| p_{in}(\*z) d\*z }{\int_{\mathcal{Z}_{out}}  p_{in}(\*z) d\*z }  \\
    &~+  \mathbb{E}[\hat{p}_{in}(\*z)|{\*z \in \mathcal{Z}_{in}}] - \mathbb{E}[\hat{p}_{in}(\*z)|{\*z \in \mathcal{Z}_{out}}]\\
    \\ &\leq \frac{\int_{\mathcal{Z}_{in}} |p_{in}(\*z) - \hat{p}_{in}(\*z)| p_{in}(\*z) d\*z  + \int_{\mathcal{Z}_{out}} |p_{in}(\*z) - \hat{p}_{in}(\*z)| p_{in}(\*z) d\*z }{\min(\int_{\mathcal{Z}_{in}}  p_{in}(\*z) d\*z, \int_{\mathcal{Z}_{out}}  p_{in}(\*z) d\*z) }\\
    &~+ \mathbb{E}[\hat{p}_{in}(\*z)|{\*z \in \mathcal{Z}_{in}}] - \mathbb{E}[\hat{p}_{in}(\*z)|{\*z \in \mathcal{Z}_{out}}] \\
    \\ &~~= \frac{\int_{\mathcal{Z}} |p_{in}(\*z) - \hat{p}_{in}(\*z)| p_{in}(\*z) d\*z  }{\min(\int_{\mathcal{Z}_{in}}  p_{in}(\*z) d\*z, \int_{\mathcal{Z}_{out}}  p_{in}(\*z) d\*z) } \\
    &~+ \mathbb{E}[\hat{p}_{in}(\*z)|{\*z \in \mathcal{Z}_{in}}] - \mathbb{E}[\hat{p}_{in}(\*z)|{\*z \in \mathcal{Z}_{out}}]
    \\ &~~= \frac{\mathbb{E}[|p_{in}(\*z) - \hat{p}_{in}(\*z)|] }{\min(\int_{\mathcal{Z}_{in}}  p_{in}(\*z) d\*z, \int_{\mathcal{Z}_{out}}  p_{in}(\*z) d\*z) } \\
    &~+ \mathbb{E}[\hat{p}_{in}(\*z)|{\*z \in \mathcal{Z}_{in}}] - \mathbb{E}[\hat{p}_{in}(\*z)|{\*z \in \mathcal{Z}_{out}}].
\end{align*}

\begin{lemma}
According to Theorem 1 in ~\cite{zhao2022analysis}, the estimation error of $k$-NN distances can be bounded by:
$$
\mathbb{E}[|p_{in}(\*z) - \hat{p}_{in}(\*z)|]  \leq \mathcal{O}(\left(\frac{k}{N}\right)^{\frac{1}{m}} + k^{-\frac{1}{2}})
$$
\label{lemma:est_err}
\end{lemma}
By Lemma~\ref{lemma:est_err}, we have the final results: 
\begin{align*}
     \mathbb{E}[\hat{p}_{in}(\*z)|{\*z \in \mathcal{Z}_{in}}] - & \mathbb{E}[\hat{p}_{in}(\*z)|{\*z \in \mathcal{Z}_{out}}] \geq \mathbb{E}[p_{in}(\*z)|{\*z \in \mathcal{Z}_{in}} ] \\
     &- \mathbb{E}[p_{in}(\*z)|{\*z \in \mathcal{Z}_{out}}] - O((\frac{k}{N})^{\frac{1}{m}} + k^{-\frac{1}{2}})
\end{align*}

\end{proof}

\section{Supplementary Experiment Details}
\label{app:experimental_details}
 \subsection{Training details} 
\label{app:train_details}
For main experimentation, we train DenseNet-101~\cite{huang2018densely} for 100 epochs using SGD with a momentum of 0.9, a weight decay of 0.0005, and a batch size of 64. We set the
initial learning rate as 0.1 and reduce it by a factor of 10 at 50, 75, and 90 epochs. For ResNet-50~\cite{he2016deep}, we use SGD with a momentum of 0.9, weight decay of 0.0001, batch size of 128, and train the model for 100 epochs. The learning rate is adjusted using the same schedule as used for training the DenseNet model. 
The relevance ratio $r \in \{0.05, 0.15, 0.25, 0.35, 0.55, 0.75\}$ and number of neighbors $k \in \{5,10,20,50,100,200,500,1000\}$ are cross-validated as described in Appendix~\ref{app:validation}. For all experiments on CIFAR-10/100 benchmark using DenseNet~\cite{huang2018densely}, we use $r=0.25$ and $k=20$ based on our validation strategy. For experimentation using ResNet-50~\cite{he2016deep}, we set $r=0.05$ and $k=20$. We report ablation results for the effect of $r$ and $k$ in Section~\ref{sec:ablations}.

\subsection{Software and Hardware}
\label{app:hardware}
We run all experiments with Python 3.7.4 and PyTorch 1.9.0. For all experimentation, we use Nvidia RTX 2080-Ti and A6000 GPUs.

\subsection{Description of OOD baselines}
\label{app:ood_description}
In this section, we include a brief description of all the OOD baseline methods.
\subsubsection{Methods using model outputs}

\paragraph{Maximum Softmax Probability (MSP)~\cite{hendrycks2016baseline}} uses the maximum softmax probability (or the confidence score) to detect OOD examples.

\paragraph{ODIN~\cite{liang2018enhancing}} ODIN utilizes the confidence score after temperature scaling and input perturbations for OOD detection. We set temperature parameter $T=1000$ for all experiments on the CIFAR-10/100 benchmark. Perturbation Magnitude $\eta$ is chosen by validating on 1000 images randomly sampled from the ID test set. We set the perturbation magnitude $\eta = 0.0016$ for CIFAR-10 and $\eta = 0.0012$ for CIFAR-100.

\paragraph{Energy~\cite{liu2020energy}} Liu~\etal~proposed using energy score for OOD detection. The energy function maps the logits to a scalar output, which is relatively
lower for ID data. This score is hyperparameter free and does not require any tuning.

\paragraph{Generalized-ODIN \cite{hsu2020generalized}} Hsu \etal~propose a decomposed confidence model for the purpose of OOD detection, where the logits of a classifier are defined using a dividend/divisor structure. The authors propose three variants of OOD detectors, namely, DeConf-I, DeConf-E, and DeConf-C --- which uses Inner-Product, Negative Euclidean Distance, and Cosine Similarity respectively. In this study, we use the DeConf-C variant, since it is shown to be the most robust of all the variants. Finally, input samples are perturbed to improve OOD performance. Similar to ODIN~\cite{liang2018enhancing}, the perturbation magnitude $\epsilon$ is chosen by validating on 1000 images randomly sampled from the ID test set. We set perturbation magnitude $\epsilon = 0.02$ for both CIFAR-10/100 benchmarks.

\paragraph{ReAct~\cite{sun2021react}} ReAct is a post-hoc OOD detection approach based on activation truncation. The paper states that the optimal OOD performance is obtained with the ReAct+Energy setting. Hence, in this study, we use the energy score for OOD detection using ReAct. Following the original paper, we calculate the clipping threshold based on the $90$-th percentile of activations estimated on the ID data.

\paragraph{GradNorm~\cite{huang2021importance}} GradNorm employs the magnitude of gradient vectors for detecting OOD samples. The gradient is derived from the KL-divergence between the softmax output and uniform probability distribution. For GradNorm, following the original implementation, we set the temperature $T = 1$.

\paragraph{LogitNorm~\cite{wei2022mitigating}} LogitNorm proposes a simple fix to the common cross-entropy loss by enforcing a constant vector norm on the logits during training. A temperature parameter $\tau$ is used to modulate the magnitude of the logits. In this study, we set $\tau = 0.04$ for both CIFAR-10/100 datasets.

\paragraph{DICE~\cite{sun2022dice}} DICE ranks weights based on a measure of contribution, and selectively uses the most salient weights to derive the output for OOD detection. By pruning away irrelevant weights, DICE reduces the output variance for OOD
data, resulting in better separability between ID and OOD. Following the original implementation, we set the sparsity parameter $p = 0.9$ for both CIFAR-10/100 benchmarks.

\subsubsection{Methods using feature representations}

\paragraph{Mahalanobis \cite{lee2018simple}} This method models the feature space as a mixture of multivariate Gaussian distributions, and calculates Mahalanobis distance~\cite{mahalanobis1936generalized} for OOD detection. The basic idea is that the testing OOD samples should be relatively far away from the centroids or prototypes of ID classes. The minimum Mahalanobis distance to all class centroids is used for OOD detection. Previous works~\cite{sun2022knn, 2021ssd} have shown that for the Mahalanobis score, stronger performance is obtained using normalized penultimate feature vectors. Hence, we use normalized penultimate feature vectors for the Mahalanobis baseline.

\paragraph{KNN~\cite{sun2022knn}} Recently Sun \etal~proposed using non-parametric nearest-neighbor distance
for OOD detection. Unlike Mahalanobis~\cite{lee2018simple}, the non-parametric approach does not impose any distributional assumption about the underlying feature space, hence providing stronger
flexibility and generality. Following original implementation, we set the number of neighbors $k=50$ for CIFAR-10 and $k=200$ for CIFAR-100.

\subsection{Validation Strategy}
\label{app:validation}
For finding the optimal value of relevance ratio $r\in\{0.05,0.15,0.25,0.35,0.55,0.75\}$ and nearest-neighbors $k\in\{5,10,20,50,100,200,500,1000\}$, we use a validation set of Gaussian noise images. For generating these images, each pixel is sampled from $\mathcal{N} (0, 1)$. We do a grid search over all possible values of $r \times k$ and the configuration providing the best AUROC is chosen as optimal. Using DenseNet-101~\cite{huang2018densely}, we find that $r = 0.25$ and $k=20$ provides the optimal performance on both CIFAR-10/100 dataset. For ResNet-50~\cite{he2016deep}, $r=0.05$ and $k=20$ provides optimal performance on CIFAR-10/100 datset. For ImageNet-100, $r = 0.35$ and $k=200$ is optimal.

\subsection{Algorithm Pseudo Code}
\label{app:pseudo}
In this section, we provide the PyTorch code for implementing SNN. Specifically, we replace the final linear layer in a neural network with the \verb|SNN| layer to learn class-relevant subspaces.

{\small 
\begin{lstlisting}[language=Python]
class SNN(nn.Linear):

    def __init__(self, in_features, out_features, bias=True, r=0.25):
        super(SNN, self).__init__(in_features, out_features, bias)
        self.r = r
        self.s = int(self.r * in_features) #subspace dimension
        
    def forward(self, input):
        vote = input[:, None, :] * self.weight
        if self.bias is not None:
            out = vote.topk(self.s, 2)[0].sum(2) + self.bias
        else:
            out = vote.topk(self.s, 2)[0].sum(2)
        return out

\end{lstlisting}}
\section{Supplementary Experimental Studies}
\subsection{Performance on Different Architectures}
\label{app:diff_arch}

 In Table~\ref{tab:cifar-100} and \ref{tab:hard_ood} (main paper), we have established the superiority of our proposed algorithm on DenseNet~\cite{huang2018densely}. Going beyond, in Table~\ref{tab:arch}, we show that \name remains competitive and outperforms the KNN counterpart for other common architectures such as ResNet~\cite{he2016deep}. From Table~\ref{tab:arch}, we observe that: (1) On ResNet-50, \name reduces FPR95 by \textbf{7.04}\% compared to the KNN baseline. This highlights precisely the benefits of using feature subspace for deriving the nearest neighbor distance. In contrast, \cite{sun2022knn} employed the original feature space, where irrelevant feature dimensions can impede the separability between ID and OOD data.
(2) Learning subspace during training time can preserve the ID test accuracy for both architectures. 


\begin{table}[t]
\centering
\small 

\resizebox{0.95\linewidth}{!}{%
\begin{tabular}{lcccc}
\textbf{Method} & \textbf{Architecture} & \textbf{FPR95}  & \textbf{AUROC} & \textbf{ID Acc.}\\
& & $\downarrow$ & $\uparrow$ & $\uparrow$ \\
\toprule
KNN~\cite{sun2022knn} & DenseNet-101 & 47.21 & 85.27 & 75.14\\
\name (ours) & DenseNet-101 &  \textbf{31.25} &  \textbf{90.76} & \textbf{75.59}\\
\midrule
KNN~\cite{sun2022knn} & ResNet-50 & 53.05 & 83.61 & 74.07\\
\name (ours) & ResNet-50 &  \textbf{46.01} & \textbf{86.54} & \textbf{74.36}\\
\bottomrule
\end{tabular}}
\caption{\small Performance comparison on CIFAR-100 dataset for various network architectures. All values are averaged over multiple OOD test datasets. The best results are in \textbf{bold}.}
\label{tab:arch}

\end{table}

\subsection{Understanding relationship between $r$ and $k$}
\label{app:rel}

In Figure~\ref{fig:ablations} (main paper), we show how varying the relevance ratio ($r$) and nearest-neighbor ($k$) independently modulate the OOD detection performance. In Figure~\ref{fig:fpr} and Figure~\ref{fig:auroc}, we visualize the relationship between the hyper-parameters $r$ and $k$ through OOD detection performance. The model is DenseNet-101 and ID is CIFAR-100. We observe: (1) for all values of $r$, the OOD performance is relatively stable for a mild value of $k$. (2) $r=0.25$ provides the optimal OOD performance which is the same as obtained by our validation strategy (Appendix~\ref{app:validation}).

\begin{table}[t]
\small
\centering
\resizebox{0.99\linewidth}{!}{
\begin{tabular}{lccc}
\textbf{Method} & \textbf{FPR95}  & \textbf{AUROC} & \textbf{ID Acc.}\\
& $\downarrow$ & $\uparrow$ & $\uparrow$ \\
\toprule
Wong et al.~\cite{wong2021leveraging} & 68.06 & 79.63 & 65.89\\
Unit Dropout~\cite{srivastava2014dropout} & 62.98 & 81.40 & 72.37\\
Adaptive Dropout~\cite{ba2013adaptive} & 51.39 & 81.57 & 75.39\\
Targeted Dropout~\cite{gomez2019learning} & 69.15 & 79.80 & 73.26\\
 \name (ours) &  \textbf{31.25} & \textbf{90.76} & \textbf{75.59} \\
\bottomrule
\end{tabular}}
\caption{\small Ablation on training-time regularization methods. For OOD detection using Dropout algorithms, we calculate KNN score~\cite{sun2022knn} using feature vector $h(\*x)$. }
\label{tab:sparsification-method}
\end{table}
\begin{figure*}[h!]
\begin{subfigure}{0.5\textwidth}
  
  \includegraphics[width=0.90\textwidth]{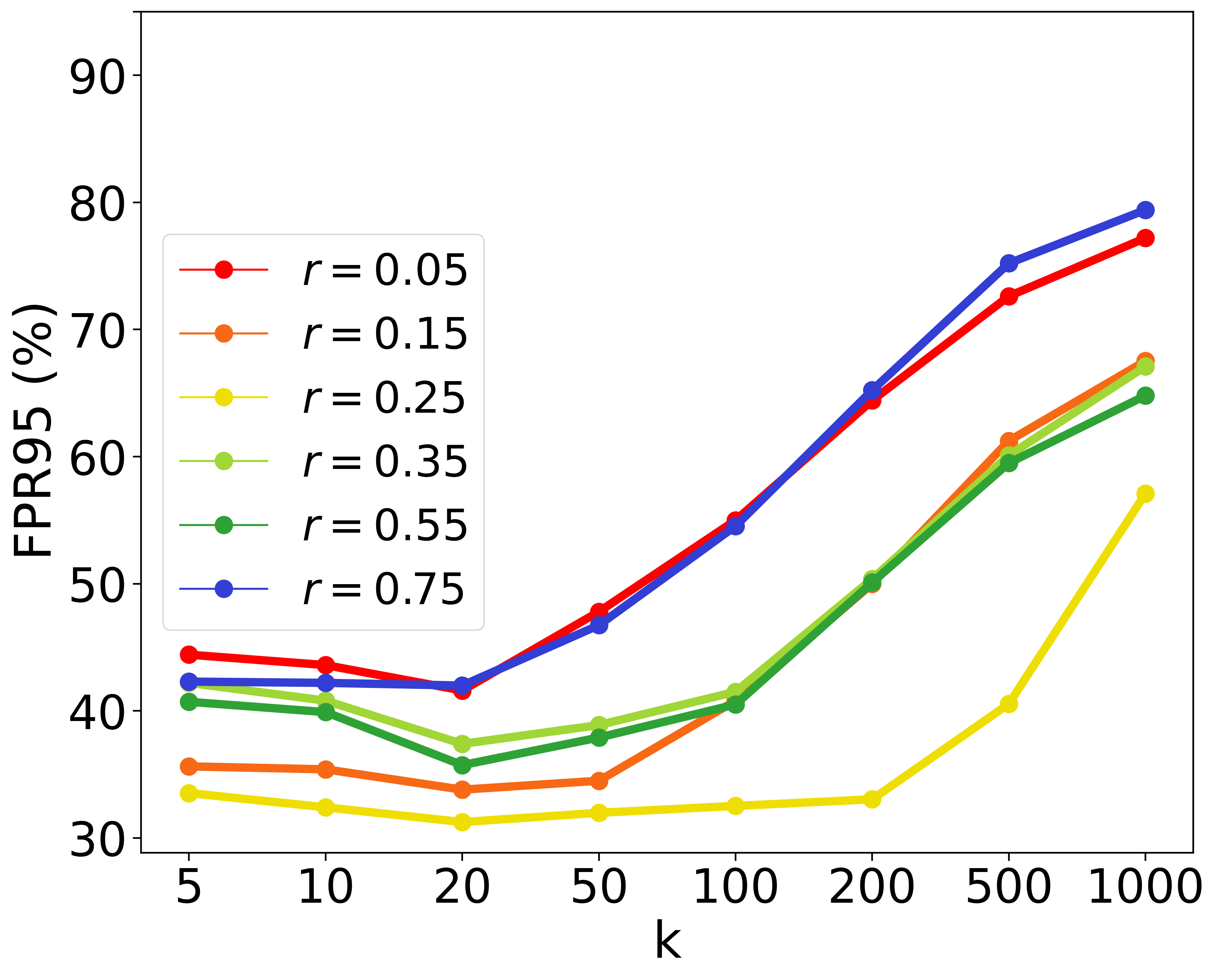}
  \caption{}
  \label{fig:fpr}
\end{subfigure}%
\begin{subfigure}{0.5\textwidth}
  \includegraphics[width=0.90\textwidth]{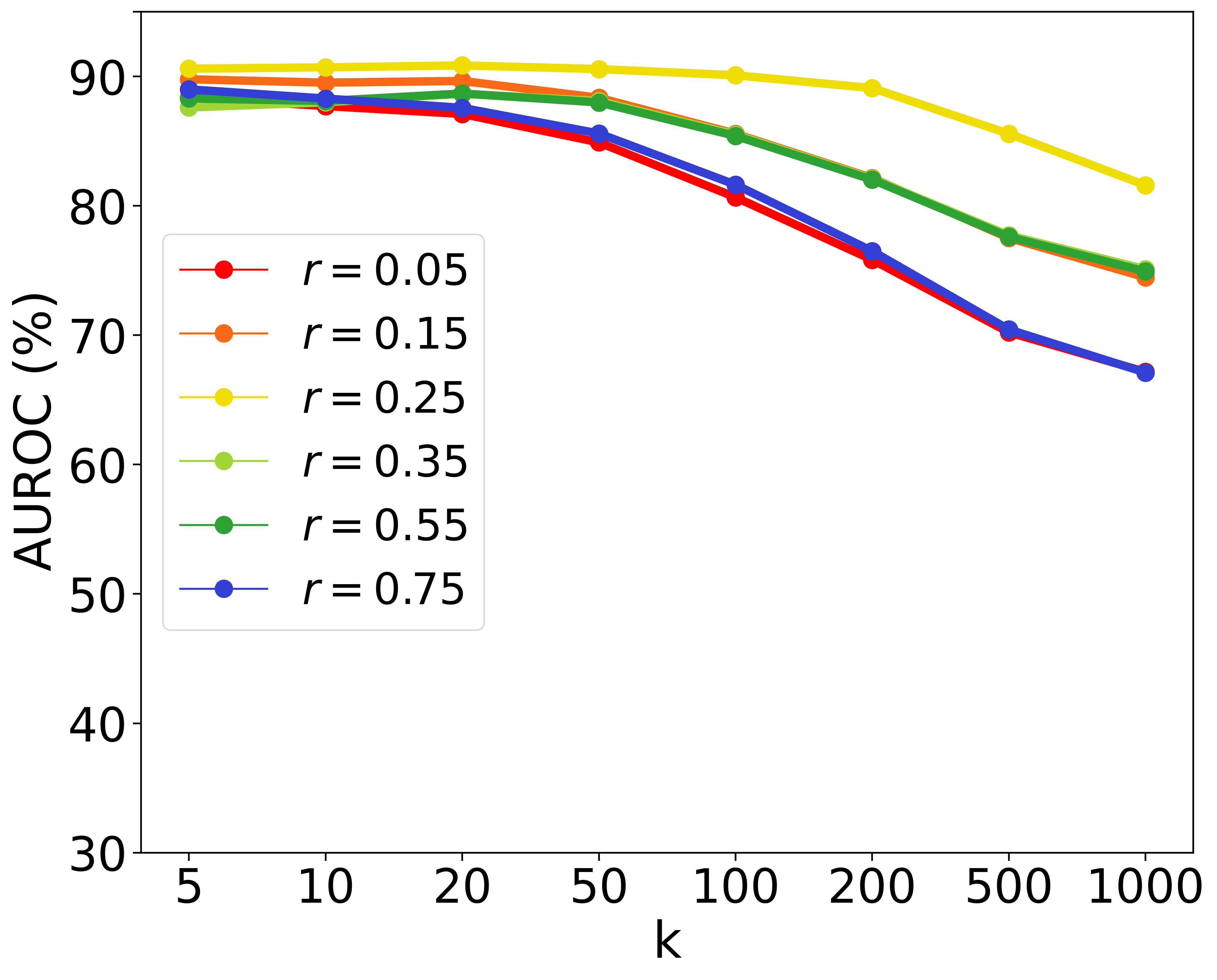}
  \caption{}
  \label{fig:auroc}
\end{subfigure}
\caption{\small Visualization of the relationship between the hyper-parameters $r$ and $k$ through OOD detection performance. The model is DenseNet-101 and ID is CIFAR-100. The OOD performance is averaged over six test datasets as mentioned in Section~\ref{sec:experiment}.} 
\label{fig:relationship}
\end{figure*}

\subsection{Evaluation on Calibration}
\label{app:calibration}

\newcolumntype{?}{!{\vrule width 1pt}}
\begin{table*}[h!]
\centering
\resizebox{0.9\textwidth}{!}{%
\begin{tabular}{lcccc?cccc?cccc}
\multirow{2}{*}{\textbf{Dataset}} & \multicolumn{2}{c}{\textbf{NLL}} & \multicolumn{2}{c?}{\textbf{NLL (w. Subspace)}} & 
\multicolumn{2}{c}{\textbf{LS}} & \multicolumn{2}{c?}{\textbf{LS (w. Subspace)}} & \multicolumn{2}{c}{\textbf{FL}} & \multicolumn{2}{c}{\textbf{FL (w. Subspace)}} \\

\cmidrule{2-13}
& SCE & ECE & SCE & ECE & SCE & ECE & SCE & ECE & SCE & ECE & SCE & ECE \\
\midrule
\multirow{1}{*}{CIFAR-10} & 11.5 & 5.6 & 9.6 &  4.4 & 8.8 & 3.9 & 8.2 & 3.8 & 6.9 & 2.3 & \textbf{3.9} & \textbf{0.8} \\
\midrule
\multirow{1}{*}{CIFAR-100} & 3.7 & 15.1 & 2.5 & 5.8  & 2.2 & 7.8 & 2.1 & \textbf{2.7} & 2.3 & 6.8 & \textbf{2.1} & 4.3 \\

 \bottomrule

\end{tabular}%
}
\caption{\small \textbf{Calibration Results.} Comparison of calibration performance when using subspace learning with commonly used loss functions (NLL/LS/FL). The model is ResNet-50. Best performing results are marked in \textbf{bold}.}
\label{tab:calibration_1}
\end{table*}

\noindent Now we move beyond OOD detection tasks and systematically investigate the calibration performance on ID data itself. With our subspace learning, the model learns the feature subspace for each class. We hypothesize that learning the feature subspace helps alleviate the problem of over-confident predictions, thereby improving model calibration. Following the literature, we evaluate calibration performance based on two common metrics: {Expected Calibration Error (ECE)}~\cite{naeini2015obtaining} and  {Static Calibration Error (SCE)}~\cite{nixon2019measuring}. 

\paragraph{Description of calibration baselines.}

Before comparing the calibration performance, we first provide a brief description of loss functions for model calibration, along with hyperparameters in training: (1)~\textbf{Label Smoothing~\cite{muller2019does}.} In Label smoothing (LS), instead of using a one-hot encoded target $y$, a soft target vector $\*q$ is defined for each sample. Specifically, $q_i = \frac{\alpha}{C-1}$ if $i \neq y$, else $q_i = 1-\alpha, ~~~\forall i \in \{1,2,...,C\}$. Here, $\alpha$ is a hyperparameter. In this study, we set $\alpha = 0.05$. (2)~\textbf{Focal Loss~\cite{lin2017focal, mukhoti2020calibrating}.} Given input $\*x$, let $\hat{p}_c = \mathbb{P}(\hat{y}=c|\*x)$ be the output softmax probability of $\*x$ belonging to class $c$. The focal loss is defined as -$(1 - \hat{p}_y)^{\gamma}\text{log}(\hat{p}_y)$, where $y$ is the ground truth label and $\gamma$ is a user-defined hyperparameter. Following the original implementation, we set $\gamma=3$ for all experiments.

\paragraph{Learning subspace improves calibration.} 
In Table~\ref{tab:calibration_1}, we observe that our subspace-regularized training algorithm improves calibration performance. In particular, we consider three losses that are commonly studied for calibration: (1) Cross Entropy Loss (NLL), (2) Label Smoothing~\cite{muller2019does}, and (3) Focal Loss (FL)~\cite{lin2017focal}.  We train the model with each of these losses and compare calibration performance with and without subspace regularization. For each dataset, we split the train set into two mutually exclusive sets: (1) $90\%$ of the train samples are used for training the model, and (2) the remaining $10\%$ of samples are used for validation. We observe from Table~\ref{tab:calibration_1} that \name improves the calibration performance for all loss functions.

\subsection{Detailed Results on All OOD Datasets}
\label{app:results}

In Table~\ref{tab:ablation_complete_c10} and Table~\ref{tab:ablation_complete_c100}, we report detailed results on six OOD test datasets when ID is CIFAR-10/100 respectively. The architecture used for all methods (including baselines) is DenseNet-101~\cite{huang2018densely}.

\subsection{Additional Discussion }
\label{app:add_discuss}
\paragraph{Relations to Wong \etal~\cite{wong2021leveraging}.} Wong \etal~\cite{wong2021leveraging} proposed an elastic net formulation to enforce sparsity for model interpretability. Hence, their motivation is fundamentally different from the problem we are trying to solve. Specifically, we learn a feature subspace for better ID-OOD separability, whereas Wong \etal improve the debuggability of neural nets. Given penultimate feature representations $h(\*x)$, \cite{wong2021leveraging} learns a sparse linear model $h(\*x)^{\top}\*w + w_0$ using the following optimization:
\begin{equation*}
\small  \min_{\*w}  \cfrac{1}{2N}||h(\*x)^{\top}\*w + w_0 -  y||^2_{2} - \lambda \left ( \cfrac{(1-\alpha)}{2}||\*w||_2^2 + \alpha||\*w||_1 \right),
\end{equation*}

where $\lambda$ and $\alpha$ are hyperparameters. In Table~\ref{tab:sparsification-method}, we compare the OOD performance between \name and Wong \etal. We make two concrete observations: (1) \name clearly outperforms \cite{wong2021leveraging} in terms of OOD detection performance. This result validates the effectiveness of our proposed subspace learning algorithm. (2) Model trained using ~\cite{wong2021leveraging} leads to suboptimal ID accuracy (65.89\%). In contrast, \name maintains the ID accuracy (75.59\%) along with improved OOD performance.

\begin{table*}[h!]

\centering

\resizebox{\textwidth}{!}{%
\begin{tabular}{l*{16}c}
 \multirow{2}{1.5cm}{\textbf{Methods}} & \multicolumn{12}{c}{\textbf{OOD Datasets}} & \multicolumn{2}{c}{\multirow{2}{*}{\centering\textbf{Average}}} & \multirow{2}{*}{\centering\textbf{ID Acc.}}\\

\cmidrule{2-13}

& \multicolumn{2}{c}{\textbf{SVHN}} & \multicolumn{2}{c}{\textbf{LSUN-c}} &
\multicolumn{2}{c}{\textbf{LSUN-r}} &
\multicolumn{2}{c}{\textbf{iSUN}} & \multicolumn{2}{c}{\textbf{Textures}} & \multicolumn{2}{c}{\textbf{Places365}} & &  \\

& FPR95 & AUROC & FPR95 & AUROC & FPR95 & AUROC &  FPR95 & AUROC & FPR95 & AUROC & FPR95 & AUROC & FPR95 & AUROC & \\
 & $\pmb{\downarrow}$ & $\pmb{\uparrow}$  & $\pmb{\downarrow}$ & $\pmb{\uparrow}$ & $\pmb{\downarrow}$ &  $\pmb{\uparrow}$ & $\pmb{\downarrow}$ & $\pmb{\uparrow}$ & $\pmb{\downarrow}$ & $\pmb{\uparrow}$ & $\pmb{\downarrow}$ & $\pmb{\uparrow}$ &
 $\pmb{\downarrow}$ & $\pmb{\uparrow}$ & $\pmb{\uparrow}$  \\
\midrule
\emph{Methods using Model Outputs}\\

MSP~\cite{hendrycks2016baseline} & 43.49 & 94.01 & 44.42 & 94.13 & 47.39 & 93.48 & 47.80 & 93.48 & 66.03 & 87.26 & 63.52 & 88.37 & 52.11 & 91.79 & 94.03\\
ODIN~\cite{liang2018enhancing} & 34.15 & 94.73 & 8.39 & 98.42 & 8.93 & 98.20 & 9.33 & 98.17 & 56.37 & 85.83 & 41.76 & 91.50 & 26.47 & 94.48 & 94.03 \\ 
GODIN~\cite{hsu2020generalized} & 3.78 & 99.17 & 9.47 & 97.83 & 5.40 & 98.67 & 6.73 & 98.54 & 23.90 & 94.32 & 55.24 & 86.52 & 17.42 & 95.84 & 94.22  \\
Energy Score~\cite{liu2020energy} & 33.07 & 95.01 & 8.10 & 98.43 & 14.04 & 97.47 & 14.58 & 97.42 & 59.61 & 85.42 & 41.98 & 91.48 & 28.40 & 94.22 & 94.03 \\

ReAct~\cite{sun2021react} & 43.67 & 94.27 & 22.37 & 96.22 & 16.68 & 97.09 & 19.81 & 96.74 & 53.44 & 89.63 & 43.23 & 91.88 & 33.12 & 94.32 & 93.27\\

GradNorm~\cite{huang2021importance} & 25.07 & 93.91 & 0.41 & 99.85 & 9.51 & 98.08 & 10.41 & 97.97 & 44.72 & 83.23 & 58.65 & 82.45 & 24.79 & 92.58 & 94.03 \\
LogitNorm~\cite{wei2022mitigating} & 14.31 & 97.63 & 2.61 & 99.37 & 17.16 & 97.18 & 17.14 & 97.16 & 39.66 & 91.17 & 47.30 & 90.40 & 19.61 & 95.51 & 93.94\\ 
DICE~\cite{sun2022dice} & 27.84 & 94.98 & 0.38 & 99.90 & 4.43 & 99.03 & 5.14 & 98.97 & 45.85 & 86.97 & 45.41 & 90.03 & 20.83 & 95.24 & 94.38 \\
\midrule
\emph{Methods using feature representations}\\
Mahalanobis~\cite{lee2018simple} & 17.85 & 94.66 & 68.49 & 76.21 & 30.06 & 92.16 & 29.86 & 91.15 & 30.73 & 88.83 & 90.34 & 52.37  & 44.55 & 82.56 & 94.03 \\ 

KNN~\cite{sun2022knn} & 3.87 & 99.31 & 10.81 & 98.13 & 12.58 & 97.75 & 12.24 & 97.87 & 21.61 & 96.07 & 49.36 & 89.54 & 18.50 & 96.36 & 94.03\\
\midrule
 \name (Ours) & 2.67 & 99.52 & 5.22 & 99.14 & 9.70 & 98.35 & 8.94 & 98.44 & 19.84 & 96.51 & 43.62 & 90.98 & \textbf{15.00} & \textbf{97.16} & 94.15 \\
\bottomrule

\end{tabular}}
\caption{table}{\small Detailed results on six OOD benchmark datasets: \texttt{Textures}~\cite{cimpoi2014describing}, \texttt{SVHN}~\cite{svhn}, \texttt{LSUN-Crop}~\cite{yu2015lsun}, \texttt{LSUN-Resize}~\cite{yu2015lsun}, \texttt{iSUN}~\cite{xu2015turkergaze}, and \texttt{Places365}~\cite{zhou2017places}. Model is DenseNet and ID is CIFAR-10.}
\label{tab:ablation_complete_c10}
\end{table*}
\begin{table*}[h!]
\centering
\resizebox{\textwidth}{!}{%
\begin{tabular}{l*{16}c}
 \multirow{2}{1.5cm}{\textbf{Methods}} & \multicolumn{12}{c}{\textbf{OOD Datasets}} & \multicolumn{2}{c}{\multirow{2}{*}{\centering\textbf{Average}}} & \multirow{2}{*}{\centering\textbf{ID Acc.}}\\

\cmidrule{2-13}

& \multicolumn{2}{c}{\textbf{SVHN}} & \multicolumn{2}{c}{\textbf{LSUN-c}} &
\multicolumn{2}{c}{\textbf{LSUN-r}} &
\multicolumn{2}{c}{\textbf{iSUN}} & \multicolumn{2}{c}{\textbf{Textures}} & \multicolumn{2}{c}{\textbf{Places365}} & &  \\

& FPR95 & AUROC & FPR95 & AUROC & FPR95 & AUROC &  FPR95 & AUROC & FPR95 & AUROC & FPR95 & AUROC & FPR95 & AUROC & \\
 & $\pmb{\downarrow}$ & $\pmb{\uparrow}$  & $\pmb{\downarrow}$ & $\pmb{\uparrow}$ & $\pmb{\downarrow}$ &  $\pmb{\uparrow}$ & $\pmb{\downarrow}$ & $\pmb{\uparrow}$ & $\pmb{\downarrow}$ & $\pmb{\uparrow}$ & $\pmb{\downarrow}$ & $\pmb{\uparrow}$ &
 $\pmb{\downarrow}$ & $\pmb{\uparrow}$ & $\pmb{\uparrow}$  \\
\midrule
\emph{Methods using Model Outputs}\\

MSP~\cite{hendrycks2016baseline} & 83.67 & 75.46 & 61.00 & 86.00 & 74.73 & 76.13 & 76.10 & 75.48 & 86.17 & 71.65 & 83.31 & 73.97 & 77.59 & 76.47 & 75.14 \\
ODIN~\cite{liang2018enhancing} & 91.51 & 76.16 & 15.16 & 97.41 & 31.92 & 93.93 & 36.75 & 92.89 & 83.92 & 72.70 & 79.12 & 77.13 & 56.39 & 86.02 & 75.14 \\
GODIN~\cite{hsu2020generalized} & 15.25 & 97.15 & 30.65 & 93.66 & 42.75 & 93.02 & 38.50 & 93.53 & 47.98 & 89.62 & 89.37 & 70.23 & 44.08 & 89.05 & 74.22 \\
Energy Score~\cite{liu2020energy} & 87.94 & 78.07 & 13.81 & 97.61 & 35.82 & 92.98 & 40.75 & 91.78 & 84.38 & 71.81 & 79.91 & 76.71 & 57.07 & 84.83 & 75.14\\
ReAct~\cite{sun2021react} & 93.65 & 74.20 & 52.07 & 87.63 & 63.14 & 88.13 & 69.96 & 85.56 & 87.07 & 72.56 & 87.90 & 67.66 & 75.06 & 79.51 & 66.56 \\ 
GradNorm~\cite{huang2021importance} & 60.62 & 87.76 & 0.65 & 99.78 & 82.20 & 75.48 & 78.68 & 78.14 & 65.73 & 71.99 & 90.41 & 65.65 & 63.05 & 79.80 & 75.14 \\
LogitNorm~\cite{wei2022mitigating} & 57.65 & 89.32 & 12.37 & 97.76 & 70.53 & 84.94 & 71.27 & 84.54 & 74.91 & 75.20 & 78.00  & 78.42 & 61.10 & 84.72 & 75.42 \\
DICE~\cite{sun2022dice} & 59.80 & 88.29 & 0.91 & 99.74 & 51.62 & 89.32  & 49.48 & 89.51 & 61.42 & 77.12 & 80.27 & 77.40 & 49.72 & 87.23 & 68.65 \\
\midrule
\emph{Methods using feature representations}\\
Mahalanobis~\cite{lee2018simple} & 70.19 & 80.49 & 93.98 & 66.81 & 24.83 & 94.97 & 26.20 & 94.19 & 31.76 & 90.01 & 94.60 & 55.17 & 56.93 & 80.27 & 75.14\\
KNN~\cite{sun2022knn} & 23.54 & 95.34 & 66.59 & 77.98 & 37.83 & 92.88 & 32.83 & 93.63 & 28.58 & 92.36 & 93.92 & 59.42 & 47.21 & 85.27  & 75.14 \\
\midrule
 \name (Ours) & 11.56 & 97.68 & 24.43 & 94.44 & 19.19 & 96.17 & 21.21 & 95.46 & 22.93 & 94.75 & 88.17 & 66.58 & \textbf{31.25} & \textbf{90.85} & 75.59 \\
\bottomrule

\end{tabular}}
\caption{table}{\small Detailed results on six OOD benchmark datasets: \texttt{Textures}~\cite{cimpoi2014describing}, \texttt{SVHN}~\cite{svhn}, \texttt{LSUN-Crop}~\cite{yu2015lsun}, \texttt{LSUN-Resize}~\cite{yu2015lsun}, \texttt{iSUN}~\cite{xu2015turkergaze}, and \texttt{Places365}~\cite{zhou2017places}. Model is DenseNet and ID is CIFAR-100.}
\label{tab:ablation_complete_c100}
\end{table*}
\end{document}